\newtheorem{Pre}{Lemma}
\newtheorem{Performance}{Theorem}
\begin{document}
\title{Streaming Algorithms for News and Scientific Literature Recommendation: Submodular Maximization with a $d$-Knapsack Constraint}

\author{Qilian Yu,~\IEEEmembership{Student Member,~IEEE,} Easton~Li~Xu,~\IEEEmembership{Member,~IEEE,} and~Shuguang~Cui,~\IEEEmembership{Fellow,~IEEE}\thanks{
Q.~Yu, E.~L.~Xu, and~S.~Cui are with the Department of Electrical and Computer Engineering, Texas
A\&M University, College Station, TX 77843 USA (e-mails: \{yuql216, eastonlixu, cui\}@tamu.edu).}}

\date{\today}
\maketitle

\begin{abstract}
Submodular maximization problems belong to the family of combinatorial optimization problems and enjoy wide applications. In this paper, we focus on the problem of maximizing a monotone submodular function subject to a $d$-knapsack constraint, for which we propose a streaming algorithm that achieves a $\left(\frac{1}{1+2d}-\epsilon\right)$-approximation of the optimal value, while it only needs one single pass through the dataset without storing all the data in the memory. In our experiments, we extensively evaluate the effectiveness of our proposed algorithm via two applications: news recommendation and scientific literature recommendation. It is observed that the proposed streaming algorithm achieves both execution speedup and memory saving by several orders of magnitude, compared with existing approaches.
\end{abstract}

\section{Introduction}
As our society enters the big data era, the main problem that data scientists are facing is how to process the unprecedented large datasets. Besides, data sources are heterogenous, comprising documents, images, sounds, and videos. Such challenges require the data processing algorithms to be more computationally efficient. The concept of submodularity plays an important role in pursuing efficient solutions for combinatorial optimization, since it has rich theoretical and practical features. Hence submodular optimization has been adopted to preprocess massive data in order to reduce the computational complexity. For example, in the kernel-based machine learning \cite{liu2013submodular,lin2009select}, the most representative subset of data is first selected in order to decrease the dimension of the feature space, by solving a submodular maximization problem under a cardinality constraint. Besides, such submodular optimization models have been extended to address data summarization problems~\cite{chakraborty2015adaptive}.

Although maximizing a submodular function under a cardinality constraint is a typical NP-hard problem, a simple greedy algorithm developed in \cite{nemhauser1978analysis} achieves a $(1-e^{-1})$-approximation of the optimal solution with a much lower computation complexity. When the main memory can store the whole dataset, such a  greedy algorithm can be easily applied for various applications. However, as it requires the full access to the whole dataset, large-scale problems prevent the greedy algorithm from being adequate due to practical computation resource and memory limitations. Even in the case when the memory size is not an issue, it is possible that the number of data samples grows rapidly such that the main memory is not able to read all of them simultaneously.

Under the scenarios discussed above, processing data in a streaming fashion becomes a necessity, where at any time point, the streaming algorithm needs to store just a small portion of data into the main memory, and produces the solution right at the end of data stream. A streaming algorithm does not require the full access to the whole dataset, thus only needs limited computation resource. In \cite{badanidiyuru2014streaming}, the authors introduced a streaming algorithm to maximize a submodular function under a cardinality constraint, where the cardinality constraint is just a special case of a $d$-knapsack constraint \cite{kulik2009maximizing} with each weight being one. When each element has multiple weights or there are more than one knapsack constraints, the algorithm proposed in \cite{badanidiyuru2014streaming} is no longer applicable.

In this paper, we develop a new streaming algorithm to maximize a monotone submodular function, subject to a general $d$-knapsack constraint. It requires only one single pass through the data, and produces a $\left(\frac{1}{1+2d} - \epsilon\right)$-approximation of the optimal solution, for any $\epsilon > 0$. In addition, the algorithm only requires $O\left(\frac{b\log b}{d\epsilon}\right)$ memory (independent of the dataset size) and $O\left(\frac{\log b}{\epsilon}\right)$ computation per element with $b$ being the standardized $d$-knapsack capacity. To our knowledge, it is the first streaming algorithm that provides a constant-factor approximation guarantee with only monotone submodularity assumed. In our experiments, compared with the classical greedy algorithm developed in \cite{sviridenko2004note}, the proposed streaming algorithm achieves over 10,000 times running time reduction with a similar performance.

The rest of this paper is organized as follows. In Section II we introduce the formulation and related existing results. In Section III we describe the proposed algorithms. In Section IV we present two applications in news and scientific literature recommendations. We draw the conclusions in Section V.

\section{Formulation and Main Results}
\subsection{Problem Formulation}
Let $V=\{1,2,\ldots,n\}$ be the ground set and $f: 2^V\to [0,\infty)$ be a nonnegative set function on the subsets of $V$. For any subset $S$ of $V$, we denote the characteristic vector of $S$ by $\bold{x}_S=(x_{S,1},x_{S,2},\ldots,x_{S,n})$, where for $1\le j\le n$, $x_{S,j} = 1$, if $j\in S$; $x_{S,j} = 0$, otherwise. For $S\subseteq V$ and $r\in V$, the marginal gain of $f$ with respect to $S$ and $r$ is defined to be
$$\Delta_f(r|S) \triangleq f(S \cup \{r\}) - f(S),$$
which quantifies the increase in the utility function $f(S)$ when $r$ is added into subset $S$. A function $f$ is submodular if it satisfies that for any $A \subseteq B \subseteq V$ and $r \in V \setminus B$, the diminishing returns condition holds:
$$\Delta_f(r|B) \leq \Delta_f(r|A).$$
Also, $f$ is said to be a monotone function, if for any $S\subseteq V$ and $r\in V$, $\Delta_f(r|S) \geq 0$. For now, we adopt the common assumption that $f$ is given in terms of a black box that computes $f(S)$ for any $S \subseteq V$. In Sections~\ref{sectionspecial},~\ref{sectiongeneral},~\ref{sectiononline}, we will discuss the case when the submodular function is independent \cite{mirzasoleiman2013distributed} of the ground set $V$ (i.e., for any $S\subseteq V$, $f(S)$ depends on only $S$, not $V\setminus S$), and in Section~\ref{sectiondependent}, we will discuss the setting where the value of $f(S)$ depends on not only the subset $S$ but also the ground set $V$.

Next, we introduce the $d$-knapsack constraint. Let $\bold{b}=(b_1,b_2,\ldots,b_d)^T$ be a $d$-dimensional budget vector, where for $1\le i\le d$, $b_i > 0$ is the budget corresponding to the $i$-th resource. Let $C=(c_{i,j})$ denote a $d\times n$ matrix, whose $(i,j)$-th entry $c_{i,j} > 0$ is the weight of the element $j \in V$ with respect to the $i$-th knapsack resource constraint. Then the $d$-knapsack constraint can be expressed by $C\bold{x}_S \leq \bold{b}$. The problem for maximizing a monotone submodular function $f: 2^V \to [0,\infty)$ subject to a $d$-knapsack constraint can be formulated as
\begin{eqnarray}\label{overallproblem}
\label{d-knap-formulation}
\begin{aligned}
   &\underset{S \subseteq V}{\textrm{maximize}} &&f(S) \\
   &\textrm{subject to} && C\bold{x}_S \leq \bold{b}.
\end{aligned}
\end{eqnarray}
We aim to \textbf{MA}ximize a monotone \textbf{S}ubmodular set function subject to a $d$-\textbf{K}napsack constraint, which is called $d$-\textbf{MASK} for short. Without loss of generality, for $1\le i\le d, 1\le j\le n$, we assume that $c_{i,j}\le b_i$. That is, no entry in $C$ has a larger weight than the corresponding knapsack budget, since otherwise the corresponding element is never selected into $S$.

For the sake of simplicity, we here standardize Problem~(\ref{overallproblem}). Let $$b\triangleq \max_{1\le i\le d} b_i\textrm{ and }c'\triangleq\min_{1\le i\le d, 1\le j\le n} b c_{i,j}/b_i.$$
For $1\le i\le d$, $1\le j\le n$, we replace each $c_{i,j}$ with $b c_{i,j}/b_ic'$ and $b_i$ with $b/c'.$ We then create a new matrix $D$ by concatenating $C$ and $\bold{b}$ over columns. That is, $D=(d_{i,j})$ is a $d\times (n+1)$ matrix, such that, for $1\le i\le d$, $d_{i,j}=c_{i,j}\ge 1$ if $1\le j\le n$; $d_{i,j}=b$ if $j=n+1$. The standardized problem has the same optimal solution as Problem~(\ref{overallproblem}). In the rest of the paper, we only consider the standardized version of the $d$-MASK problem.

\subsection{Related Work and Main Results}
Submodular optimization has been regarded as a powerful tool for combinatorial massive data mining and machine learning, for which a streaming algorithm processes the dataset piece by piece and then produces an approximate solution right at the end of the data stream. This makes it quite suitable to process a massive dataset in many applications.

When $d=1$ and all entries of $C$ are ones, Problem~(\ref{overallproblem}) is equivalent to maximizing a monotone submodular function under a cardinality constraint. This optimization problem has been proved to be NP-hard \cite{nemhauser1978analysis}, and people have developed many approximation algorithms to solve this problem, among which the greedy algorithm \cite{nemhauser1978analysis} is the most popular one. Specifically, the greedy algorithm selects the element with the maximum marginal value at each step and produces a $(1-e^{-1})$-approximation guarantee with $O(kn)$ computation complexity, where $k$ is the maximum number of elements that the solution set can include, and $n$ is the number of elements in the ground set $V$. Recently, some accelerated algorithms were proposed in \cite{mirzasoleiman2015lazier, badanidiyuru2014fast}. Unfortunately, neither of them can be applied to the case when the size of the dataset is over the capacity of the main memory. A streaming algorithm was developed in \cite{badanidiyuru2014streaming} with a $(1/2-\epsilon)$-approximation of the optimal value, for any $\epsilon > 0$. This streaming algorithm does not require the full access to the dataset, and needs only one pass through the dataset. Thus it provides a practical way to process a large dataset on the fly with a low memory requirement, but not applicable under a general $d$-knapsack constraint.

Further, the authors in \cite{lin2010multi} dealt with the case when $d=1$ and each entry of $C$ can take any positive values. Maximizing a monotone submodular function under a single knapsack constraint is also called a budgeted submodular maximization problem. This problem is also NP-hard, and the authors in \cite{sviridenko2004note} suggested a greedy algorithm, which produces a $(1-e^{-1})$-approximation of the optimal value with $O(n^5)$ computation complexity. Specifically, it first enumerates all the subsets of cardinalities at most three, then greedily adds the elements with maximum marginal values per weight to every subset starting with three elements, and finally outputs the suboptimal subset. Although the solution has a $(1-e^{-1})$-approximation guarantee, the $O(n^5)$ computation cost prevents this greedy algorithm from being widely used in practice. Hence some modified versions of the greedy algorithm have been developed. The authors in \cite{lin2010multi} applied it to document summarization with a $(1-e^{-1/2})$ performance guarantee. In \cite{leskovec2007cost}, the so-called cost effective forward (CEF) algorithm for outbreak detection was proposed, which produces a solution with a $(1-e^{-1})/2$-approximation guarantee and requires only $O(Mn)$ computation complexity, where $M$ is the knapsack budget when $d=1$.

The considered $d$-MASK problem is a generalization of the above problems to maximize a submodular function under more than one budgeted constraints. A framework was proposed in \cite{kulik2009maximizing} for maximizing a submodular function subject to a $d$-knapsack constraint, which yields a $(1-e^{-1}-\epsilon)$-approximation for any $\epsilon > 0$. However, it is hard to implement this algorithm, since it involves some high-order terms with respect to the number of budgets, making it inappropriate for processing large datasets \cite{papachristoudis2015theoretical}. Later, an accelerated algorithm was developed in \cite{kumar2013fast}. It runs for $O\left(1/\delta\right)$ rounds in MapReduce \cite{dean2004mapreduce} for a constant $\delta$, and provides an $\Omega\left(1/d\right)$-approximation. However, this algorithm needs an $O(\log n)$ blowup in communication complexity among various parts. As observed in \cite{kiveris2014connected}, such a blowup decreases its applicability in practice. Note that the authors in \cite{kumar2013fast} mentioned that the MapReduce method with an $\Omega\left(1/d\right)$-approximation can be extended to execute in a streaming fashion, but did not provide any concrete algorithms and the associated analysis.

\begin{table*}[!t]
\centering
\caption{Comparison of approximation guarantees and computation costs}
\begin{tabular}{|c|c|c|c|c|}
\hline
\multirow{2}{*}{}       & \multicolumn{2}{c|}{Best Performance Known Algorithms} & \multicolumn{2}{c|}{Proposed Streaming Algorithms}                                \\ \cline{2-5} 
                        & Approx. Factor                & Comput. Cost           & Approx. Factor                         & Comput.Cost                              \\ \hline
1-Knapsack Constraint   & $1-e^{-1}$                    & $O(n^5)$               & \multirow{2}{*}{$1/(1+2d) - \epsilon$} & \multirow{2}{*}{$O(n \log b /\epsilon)$} \\ \cline{1-3}
$d$-Knapsack Constraint & $1-e^{-1}-\epsilon$           & Polynomial             &                                        &                                          \\ \hline
\end{tabular}\label{table1}
\end{table*}

Table~\ref{table1} shows the comparison among the approximation guarantees and computation costs of the aforementioned algorithms against our proposed algorithm.

To our best knowledge, this paper is the first to propose an efficient streaming algorithm for maximizing a monotone submodular function under a $d$-knapsack constraint, with 1) a constant-factor approximation guarantee, 2) no assumption on full access to the dataset, 3) execution of a single pass, 4) $O(b\log b)$ memory requirement, 5) $O(\log b)$ computation complexity per element, and 6) only assumption on monotonicity and submodularity of the objective function. In the following section, we describe the proposed algorithm in details.

\section{Streaming Algorithms for Maximizing Monotone Submodular Functions}

\subsection{Special Case: One Cardinality Constraint}\label{sectionspecial}
We first consider a special case of the $d$-MASK problem: maximizing a submodular function subject to one cardinality constraint:
\begin{eqnarray}\label{specialcaseproblem}
\label{cardinality}
\begin{aligned}
   &\underset{S \subseteq V}{\textrm{maximize}} &&f(S) \\
   &\textrm{subject to} && |S| \leq k.
\end{aligned}
\end{eqnarray}
In \cite{nemhauser1978analysis}, the authors proved this problem is NP-hard and proposed a classical greedy algorithm. At each step of the algorithm, as we explained earlier, the element with the largest marginal value is added to the solution set. This operation, in fact, reduces the ``gap'' to the optimal solution by a significant amount. Formally, if element $j$ is added to the current solution set $S$ by the greedy algorithm, the marginal value $\Delta_f(j | S)$ of this picked element should be at least above certain threshold. In \cite{badanidiyuru2014streaming}, the authors developed the so-called Sieve-Streaming algorithm, where the threshold for the marginal value is set to be $(\textrm{OPT}/2-f(S))/(k-|S|)$, where $S$ is the current solution set, $k$ is the maximum allowed number of elements in $S$, and OPT is the optimal value of the optimization problem. In our paper, for this submodular maximization problem under a single cardinality constraint, we first introduce a simple streaming algorithm under the assumption that we have the knowledge of the optimal value of the problem.

\begin{algorithm} [H]
\caption{Simple Streaming Algorithm}
\label{c}
\begin{algorithmic}[1]
\State Input: $v$ such that $\alpha \textrm{OPT} \leq v \leq \textrm{OPT}$, for some $\alpha \in (0,1]$.
\State $S:= \emptyset.$
\NoDoFor{$j := 1$  \textbf{to} $n$}
	\If {$f(S \cup \{j\}) - f(S) \geq \frac{v}{2k}$ and $|S| \leq k$}
			\State $S := S \cup \{j\}.$
	\EndIf
\EndFor \\
\Return $S.$
\end{algorithmic}
\end{algorithm}

\begin{Performance}
\label{th}
The simple streaming algorithm (Algorithm \ref{c}) produces a solution $S$ such that $$f(S) \geq \frac{\alpha}{2}\textrm{OPT}.$$
\end{Performance}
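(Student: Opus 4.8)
The plan is to split the analysis into two cases according to whether the algorithm's loop terminates with $|S| = k$ or with $|S| < k$. The key observation that drives both cases is the acceptance rule: every element $j$ that is added satisfies $\Delta_f(j \mid S_j) \geq \frac{v}{2k}$, where $S_j$ is the partial solution just before $j$ is inserted. By telescoping the marginal gains along the order of insertion and using nonnegativity of $f$ (so $f(\emptyset) \geq 0$), we get $f(S) = \sum_{j \in S} \Delta_f(j \mid S_j) \geq |S| \cdot \frac{v}{2k}$.

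First, suppose the final solution has $|S| = k$. Then immediately $f(S) \geq k \cdot \frac{v}{2k} = \frac{v}{2} \geq \frac{\alpha}{2}\textrm{OPT}$, using $v \geq \alpha\textrm{OPT}$, and we are done. Second, suppose $|S| < k$ at termination. Then every element $r$ outside $S$ must have been rejected on the basis of its marginal value (the cardinality slot was never the binding reason, since $|S|$ never reached $k$), so at the moment $r$ was examined we had $\Delta_f(r \mid S_r) < \frac{v}{2k}$, where $S_r \subseteq S$ is the partial solution at that time. By submodularity (diminishing returns), $\Delta_f(r \mid S) \leq \Delta_f(r \mid S_r) < \frac{v}{2k}$ for every $r \in V \setminus S$. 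Now let $S^\ast$ be an optimal solution, so $|S^\ast| \leq k$ and $f(S^\ast) = \textrm{OPT}$. Using monotonicity and submodularity,
\begin{equation*}
\textrm{OPT} = f(S^\ast) \leq f(S \cup S^\ast) \leq f(S) + \sum_{r \in S^\ast \setminus S} \Delta_f(r \mid S) < f(S) + k \cdot \frac{v}{2k} = f(S) + \frac{v}{2}.
\end{equation*}
Hence $f(S) > \textrm{OPT} - \frac{v}{2} \geq \textrm{OPT} - \frac{\textrm{OPT}}{2} = \frac{\textrm{OPT}}{2} \geq \frac{\alpha}{2}\textrm{OPT}$, where we used $v \leq \textrm{OPT}$ in the middle step. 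Combining the two cases gives $f(S) \geq \frac{\alpha}{2}\textrm{OPT}$.

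I expect the only delicate point to be the second case, specifically making precise that when $|S| < k$ each unpicked element was rejected \emph{solely} because its marginal gain fell below the threshold, and then correctly bounding the sum $\sum_{r \in S^\ast \setminus S} \Delta_f(r \mid S)$ by $|S^\ast \setminus S| \cdot \frac{v}{2k} \leq k \cdot \frac{v}{2k}$ via the submodular decomposition of $f(S \cup S^\ast) - f(S)$ along the elements of $S^\ast \setminus S$. Everything else — the telescoping in the first case and the use of $v \leq \textrm{OPT}$ and $v \geq \alpha\textrm{OPT}$ — is routine.
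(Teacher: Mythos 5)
Your proof is correct and follows essentially the same two-case argument as the paper: telescoping the accepted marginal gains when $|S|=k$, and bounding $f(S^\ast)-f(S)\leq\sum_{r\in S^\ast\setminus S}\Delta_f(r\mid S)< k\cdot\frac{v}{2k}$ via submodularity when $|S|<k$. If anything, your version is slightly more careful than the paper's, since you explicitly pass from the rejection condition at the time element $r$ was streamed (relative to the partial set $S_r$) to the final set $S$ using diminishing returns, a step the paper leaves implicit.
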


\begin{proof}
Given $v \in [\alpha \textrm{OPT}, \textrm{OPT}]$, let us discuss the following two cases.

\underline{Case 1:} $|S| = k$. For $1\le i\le k$, let $a_i$ be the element added to $S$ in the $i$-th iteration of the for-loop. Then we obtain
\begin{align*}
   f(S)&=f(\{a_1,a_2,\ldots,a_k\})\geq f(\{a_1,a_2,\ldots,a_k\})-f(\emptyset)\\
   &=\sum_{i=1}^k \big[f(\{a_1,a_2,\ldots,a_i\})-f(\{a_1,a_2,\ldots,a_{i-1}\}) \big].
\end{align*}
By the condition in Line 4 of Algorithm 1, for $1\le i\le k$, we have
$$f(\{a_1,a_2,\ldots,a_i\})-f(\{a_1,a_2,\ldots,a_{i-1}\})\geq\frac{v}{2k},$$
and hence
$$f(S)\geq \frac{v}{2k} \cdot k \geq \frac{\alpha}{2}\textrm{OPT}.$$

\underline{Case 2:} $|S| < k$. Let $\bar{S}= S^* \backslash S$, where $S^*$ is the optimal solution to the Problem (\ref{specialcaseproblem}). For each element $a \in \bar{S}$, we have
\begin{align*}
	f(S \cup \{a\}) - f(S) <\frac{v}{2k}.
\end{align*}
Since $f$ is monotone submodular, we obtain
\begin{align*}
	&f(S^*)-f(S) =f(S \cup \bar{S}) -f(S) \\
	&\leq \sum_{a\in S'}[f(S \cup \{a\}) - f(S)] < \frac{v}{2k} \cdot k \leq \frac{1}{2}f(S^*),
\end{align*}
which implies that
$$f(S) >\frac{1}{2}f(S^*)=\frac{1}{2}\textrm{OPT}\geq \frac{\alpha }{2}\textrm{OPT}.$$
\end{proof}

This simple streaming algorithm produces a solution by visiting every element in the ground set only once. But it requires the knowledge of the optimal value of the problem. Besides, when the elements have non-uniform weights, this algorithm does not work. To deal with the problem with non-uniform weights and more than one constraint, we are going to modify the greedy rule and take the weight-dependent marginal values into account in a streaming fashion.

\subsection{General Case: Multiple Knapsack Constraints}\label{sectiongeneral}
In order to get the desirable output, in this subsection, we first assume we have some knowledge of \textrm{OPT}, and then remove this assumption by estimating \textrm{OPT} based on the maximum value per weight of any single element. At the end, we will remove all assumptions to develop the final version of the streaming algorithm for the general case of a $d$-MASK problem.

Suppose that we know a value $v$ such that $\alpha \textrm{OPT} \leq v \leq \textrm{OPT}$ for some $0 < \alpha \leq 1$. That is, we know an approximation of $\textrm{OPT}$ up to a constant factor $\alpha$. We then construct the following algorithm to choose a subset $S$ with the knowledge of the optimal value of the problem.
\begin{algorithm} [H]
\caption{\textrm{OPT}-KNOWN-$d$-MASK}
\label{d-1}
\begin{algorithmic}[1]
\State Input: $v$ such that $\alpha \textrm{OPT} \leq v \leq \textrm{OPT}$, for some $\alpha \in (0,1].$
\State $S:= \emptyset.$
\NoDoFor{$j := 1$  \textbf{to} $n$}
        \If {$c_{i,j} \hspace{-0.5mm}\geq \hspace{-0.5mm}\frac{b}{2}\hspace{-0.5mm}\textrm{ and }\hspace{-0.5mm}\frac{f(\{j\})}{c_{i,j}} \hspace{-0.5mm} \geq \hspace{-0.5mm} \frac{2v}{b(1+2d)}$\hspace{-0.5mm}
for \hspace{-0.5mm} some \hspace{-0.5mm} $i \in [1,d]$}
        \State $S:=\{j\}.$
		\State \textbf{return} $S$.
	\EndIf
	\If {$\sum_{l \in S\cup \{j\}}c_{i,l} \leq b$ and $\frac{\Delta_{f}(j | S)}{c_{i,j}} \geq \frac{2v}{b(1+2d)}$ for all $i \in [1,d]$}
			\State $S:= S \cup \{ j \}.$
		\EndIf		
\EndFor \\
\Return $S$.
\end{algorithmic}
\end{algorithm}

At the beginning of the algorithm, the solution set $S$ is set to be an empty set. The algorithm will terminate when either we find an element $j\in V$ satisfying
\begin{equation}\label{defbigelement}
c_{i,j} \geq \frac{b}{2}\textrm{ and }\frac{f(\{j\})}{c_{i,j}} \geq \frac{2v}{b(1+2d)}\textrm{ for some }i \in [1,d],
\end{equation}
or we finish one pass through the dataset. Here we define that an element $j \in V$ is a \emph{big element} if it satisfies (\ref{defbigelement}). When the algorithm finds a big element $a$, it simply outputs $\{a\}$ and terminates. The following lemma shows that $\{a\}$ is already a good enough solution.

\begin{Pre}
\label{lemmabigelement}Assume the input $v$ satisfies $\alpha \textrm{OPT} \leq v \leq \textrm{OPT}$, and $V$ has at least one big element. The output $S$ of Algo-\mbox{rithm~2} satisfies
$$f(S) \geq \frac{\alpha}{1+2d}\textrm{OPT}.$$
\end{Pre}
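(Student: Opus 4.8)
The plan is to observe that, whenever $V$ contains a big element, Algorithm~2 cannot reach the end of its single pass: it must halt early by executing the \textbf{return} inside the \textbf{for}-loop, and the object returned at that moment is a singleton $\{a\}$ with $a$ a big element. Once this structural fact is in hand, the bound follows from a one-line manipulation of the two inequalities in~(\ref{defbigelement}) that define a big element, together with the hypothesis $v \ge \alpha\,\textrm{OPT}$.

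First I would pin down exactly which element is returned. Let $a \in V$ be the first big element in the processing order $1,2,\ldots,n$; this exists by hypothesis. The conditional test that triggers the in-loop \textbf{return} is precisely the big-element condition~(\ref{defbigelement}) evaluated at the current index $j$, and this test depends only on $j$ (through $c_{i,j}$ and $f(\{j\})$), not on the partial set $S$ assembled so far. Hence for every $j < a$ the test fails, by minimality of $a$, so the algorithm does not terminate before reaching $j = a$; and at $j = a$ the test succeeds, so the algorithm sets $S := \{a\}$ and returns. Consequently the output is $S = \{a\}$, and there exists $i \in [1,d]$ with $c_{i,a} \ge b/2$ and $f(\{a\})/c_{i,a} \ge \frac{2v}{b(1+2d)}$.

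Second I would simply compute: multiplying the last inequality by $c_{i,a} > 0$ and then applying $c_{i,a} \ge b/2$ gives
\[
f(S) = f(\{a\}) = c_{i,a}\cdot\frac{f(\{a\})}{c_{i,a}} \ \ge\ c_{i,a}\cdot\frac{2v}{b(1+2d)} \ \ge\ \frac{b}{2}\cdot\frac{2v}{b(1+2d)} \ =\ \frac{v}{1+2d},
\]
and invoking $v \ge \alpha\,\textrm{OPT}$ yields $f(S) \ge \frac{\alpha}{1+2d}\,\textrm{OPT}$, as claimed. (Feasibility of $\{a\}$ is automatic, since after standardization every single element satisfies the $d$-knapsack constraint, so there is nothing to check there.)

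I do not expect a serious obstacle in this lemma: neither monotonicity nor submodularity of $f$ is invoked in this case, and the numerical estimate is elementary. The only point that deserves care is the bookkeeping in the first step — confirming that the mere existence of a big element forces the algorithm to stop on the \emph{earliest} such element, with that element as the sole output, so that $f(S)$ equals $f$ of a big element rather than $f$ of some partially built solution set. Everything after that is arithmetic.
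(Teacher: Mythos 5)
Your proposal is correct and follows essentially the same route as the paper's proof: identify the first big element $a$ encountered, note that Algorithm~2 returns $S=\{a\}$ at that point, and chain the two defining inequalities of a big element with $v\ge\alpha\,\textrm{OPT}$ to get $f(S)\ge\frac{v}{1+2d}\ge\frac{\alpha}{1+2d}\textrm{OPT}$. The extra bookkeeping you supply about why the algorithm cannot pass over a big element is a welcome clarification but does not change the argument.
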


\begin{proof}
Let $a$ be the first big element that Algorithm~2 finds. Then $\{a\}$ is output and the algorithm terminates. Therefore, by (\ref{defbigelement}), we have
$$f(S)=f(\{a\}) \geq \frac{2v}{b(1+2d)} \cdot \frac{b}{2} =\frac{v}{1+2d}\geq \frac{\alpha}{1+2d}\textrm{OPT}.$$
\end{proof}

When $V$ does not contain any big elements, during the data streaming, an element $j$ is added to the solution set $S$ if 1) the marginal value per weight for each knapsack constraint $\Delta_f(j|S)/c_{i,j}$ is at least $\beta v/b$ for $1\le i\le d$, and 2) the overall $d$-knapsack constraint is still satisfied. In this paper, we set  $\beta = \frac{2b}{1+2d}$, which gives us the best approximation guarantee as shown in the proof of Theorem~\ref{th1}. The following lemma shows the property of the output of Algorithm 2.
\begin{Pre}
\label{2p}
Assume that $V$ has no big elements. The output $S$ of Algorithm~\ref{d-1} has the following two properties:
\begin{enumerate}
  \item There exists an ordering $a_1, a_2, \ldots, a_{|S|}$ of the elements in $S$, such that for all $0 \leq t < |S|$ and $1 \leq i \leq d$, we have \begin{equation}\label{condition1}
      \frac{\Delta_f(a_{t+1}|S_t)}{c_{i,a_t}} \geq \frac{2v}{b(1+2d)},
        \end{equation}
      where $S_t = \{ a_1, a_2, \ldots, a_t\}$.
  \item  Assume that for $1 \leq i \leq d$, $\sum_{t = 1}^{|S|}c_{i,a_t} \leq b/2$. Then for each $a_j \in V$, there exists an index $\mu(a_j)$, with $1 \leq \mu(a_j) \leq d$ such that
      $$\frac{\Delta_f(a_j|S)}{c_{\mu(a_j), a_j}} < \frac{2v}{b(1+2d)}.$$
\end{enumerate}
\end{Pre}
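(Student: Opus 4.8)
The plan is to take the ordering $a_1,a_2,\ldots,a_{|S|}$ to be the order in which Line~9 of Algorithm~\ref{d-1} inserts the elements into $S$. The first thing to note is that, since $V$ contains no big element, the test in Line~4 never succeeds, so Algorithm~\ref{d-1} performs a full pass, never returns a singleton, and every element of $S$ is admitted through Line~8. Because the stream is scanned in index order, the set $S_t=\{a_1,\ldots,a_t\}$ is exactly the content of the solution set at the instant $a_{t+1}$ is examined; hence the Line~8 guard that caused $a_{t+1}$ to be added reads $\Delta_f(a_{t+1}|S_t)/c_{i,a_{t+1}}\ge \frac{2v}{b(1+2d)}$ for every $i\in[1,d]$ (the weight in the denominator being that of the element $a_{t+1}$ just added), including the base case $t=0$ with $S_0=\emptyset$. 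This yields Property~1 directly from the pseudocode.

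For Property~2 I would fix an arbitrary $j\in V$ and split into three cases. If $j\in S$, then $\Delta_f(j|S)=0$ and any index works as $\mu(j)$. If $j\notin S$ but $c_{i,j}\ge b/2$ for some $i$, then, since $j$ is not a big element, $f(\{j\})/c_{i,j}<\frac{2v}{b(1+2d)}$ for that same $i$; combining this with $\Delta_f(j|S)\le\Delta_f(j|\emptyset)=f(\{j\})-f(\emptyset)\le f(\{j\})$ (submodularity together with $f(\emptyset)\ge 0$) lets me choose $\mu(j)=i$. If $j\notin S$ and $c_{i,j}<b/2$ for all $i$, let $S_{<j}\subseteq S$ be the solution set at the time $j$ was examined; then $\sum_{l\in S_{<j}}c_{i,l}\le\sum_{l\in S}c_{i,l}\le b/2$ by the hypothesis of Property~2, so $\sum_{l\in S_{<j}\cup\{j\}}c_{i,l}<b$ for every $i$, i.e.\ the capacity half of the Line~8 guard held. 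Since $j$ was rejected anyway, the marginal-value half must have failed for some index $i$, namely $\Delta_f(j|S_{<j})/c_{i,j}<\frac{2v}{b(1+2d)}$; taking $\mu(j)=i$ and using $\Delta_f(j|S)\le\Delta_f(j|S_{<j})$ (submodularity, since $S_{<j}\subseteq S$ and $j\notin S$) finishes this case.

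The only delicate point is the case split in Property~2: the argument hinges on using the standing hypothesis $\sum_{t}c_{i,a_t}\le b/2$ together with $c_{i,j}<b/2$ to certify that admitting $j$ would not have overflowed any knapsack, which is what permits charging the rejection of $j$ to the marginal-value threshold rather than to a capacity violation; the large-weight elements, for which this certification is unavailable, are instead handled by the ``no big element'' hypothesis and the submodular bound $\Delta_f(j|S)\le f(\{j\})$. Everything else is bookkeeping, the only structural fact needed beyond the pseudocode being monotonicity of marginal gains under set inclusion, which transports the rejection inequality at $S_{<j}$ to the final output $S$.
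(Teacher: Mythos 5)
Your proof is correct and follows essentially the same route as the paper's: Property 1 is read directly off the insertion order and the Line-8 guard, and Property 2 rests on the same three ingredients (the no-big-element hypothesis, the half-capacity assumption, and submodularity to transport the rejection inequality from the intermediate set $S_{<j}$ back to the final $S$). The only difference is presentational --- the paper argues Property 2 by contradiction and compresses the timing and large-weight cases into one sentence, whereas you make them explicit as a direct case split (and you implicitly correct the denominator $c_{i,a_t}$ in (\ref{condition1}), which should read $c_{i,a_{t+1}}$).
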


\begin{proof}
1) For $0\le t< |S|$, at the $(t+1)$-th step of the algorithm, assume that $a_{t+1}$ is the element added to the current solution set $S_t=\{a_1,a_2,\ldots,a_t\}$. Then $a_1,a_2,\ldots,a_{|S|}$ forms an ordering satisfying (\ref{condition1}).

2) By contradiction, assume that there exists $j\in V$ such that for $1\le i\le d$, we have
$$\frac{f(S \cup \{j\})-f(S)}{c_{i, j}} \geq \frac{2v}{b(1+2d)}.$$
Since $j$ is not a big element and $f$ is submodular, we have
$c_{i,j}<b/2$, for $1\le i\le d$. Then $j$ can be added into $S$, where a contradiction occurs.
\end{proof}

We then establish the following theorem to show that Algorithm~\ref{d-1} produces an $\Big(\frac{\alpha}{1+2d}\Big)$-approximation of the optimal solution to Problem (\ref{overallproblem}).
\begin{Performance}
\label{th1}
Assuming that the input $v$ satisfies $\alpha \textrm{OPT} \leq v \leq \textrm{OPT}$, Algorithm \ref{d-1} has the following properties: 
\begin{itemize}
  \item It outputs $S$ that satisfies $f(S) \geq  \frac{\alpha}{1+2d} \textrm{OPT}$;
  \item It only goes one pass over the dataset, stores at most $O(b)$ elements, and has $O(d)$ computation complexity per element.
\end{itemize}
\end{Performance}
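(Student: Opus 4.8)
The plan is to settle the approximation claim by a three-way case analysis that bottoms out in the two lemmas already established, and then to read the resource bounds directly off the code of Algorithm~\ref{d-1}. If $V$ contains a big element, Lemma~\ref{lemmabigelement} already delivers $f(S)\ge\frac{\alpha}{1+2d}\textrm{OPT}$, so I would assume from here on that $V$ has no big element and invoke Lemma~\ref{2p}: let $a_1,\ldots,a_{|S|}$ be the ordering of the output $S$ given by part~1, with $S_t=\{a_1,\ldots,a_t\}$. The two remaining cases are separated by whether some knapsack is at least half full under $S$.

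Case A: there is an index $i\in[1,d]$ with $\sum_{t=1}^{|S|}c_{i,a_t}>b/2$. Here I would telescope, using $f(\emptyset)\ge0$, to get $f(S)\ge f(S)-f(\emptyset)=\sum_{t=0}^{|S|-1}\Delta_f(a_{t+1}|S_t)$, then bound each summand below by the threshold inequality of part~1 of Lemma~\ref{2p} specialized to this coordinate $i$, namely $\Delta_f(a_{t+1}|S_t)\ge\frac{2v}{b(1+2d)}\,c_{i,a_{t+1}}$. Summing and applying the case hypothesis gives $f(S)\ge\frac{2v}{b(1+2d)}\sum_{t=1}^{|S|}c_{i,a_t}>\frac{2v}{b(1+2d)}\cdot\frac b2=\frac{v}{1+2d}\ge\frac{\alpha}{1+2d}\textrm{OPT}$.

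Case B: $\sum_{t=1}^{|S|}c_{i,a_t}\le b/2$ for every $i\in[1,d]$. Now part~2 of Lemma~\ref{2p} applies: writing $S^*$ for an optimal solution, for each $a\in S^*\setminus S$ there is an index $\mu(a)\in[1,d]$ with $\Delta_f(a|S)<\frac{2v}{b(1+2d)}c_{\mu(a),a}$. Monotonicity and submodularity then give $f(S^*)-f(S)\le f(S\cup S^*)-f(S)\le\sum_{a\in S^*\setminus S}\Delta_f(a|S)<\frac{2v}{b(1+2d)}\sum_{a\in S^*\setminus S}c_{\mu(a),a}$. The crux is to bound that last sum: partition $S^*\setminus S$ into the $d$ buckets $\{a:\mu(a)=i\}$ and use the feasibility of $S^*$, which guarantees $\sum_{a\in S^*}c_{i,a}\le b$ for each $i$, so each bucket contributes at most $b$ and the sum is at most $db$. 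This yields $f(S^*)-f(S)<\frac{2dv}{1+2d}$, and since $v\le\textrm{OPT}=f(S^*)$ I get $f(S)>\textrm{OPT}-\frac{2d}{1+2d}\textrm{OPT}=\frac{1}{1+2d}\textrm{OPT}\ge\frac{\alpha}{1+2d}\textrm{OPT}$, which closes the approximation bound in all cases.

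For the resource bounds, the single outer for-loop is one pass; standardization forces $c_{i,j}\ge1$ while every accepted element keeps $\sum_{a\in S}c_{i,a}\le b$ for each $i$, so $|S|\le b$ and at most $O(b)$ elements are stored; and processing $j$ consists of the big-element test and the knapsack-and-threshold test over the $d$ coordinates together with a constant number of oracle calls to $f$ (one for $f(S\cup\{j\})$, with $f(S)$ cached), i.e.\ $O(d)$ work per element. The only genuinely nontrivial step in the whole argument is the bucketing in Case B that aggregates the per-coordinate thresholds of Lemma~\ref{2p} into the global estimate $f(S^*)-f(S)<\frac{2dv}{1+2d}$; everything else is telescoping, the standard monotone-submodular inequality, or bookkeeping. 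It is also worth remarking that the threshold $\frac{2v}{b(1+2d)}$ — equivalently the choice $\beta=\frac{2b}{1+2d}$ — is precisely the value making the Case A bound $\frac{v}{1+2d}$ and the Case B bound $\frac{\textrm{OPT}}{1+2d}$ coincide, which is why this is the optimal threshold for the analysis.
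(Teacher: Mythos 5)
Your proposal is correct and follows essentially the same route as the paper's proof: Lemma~\ref{lemmabigelement} for the big-element case, telescoping with Property~1 of Lemma~\ref{2p} when some knapsack is at least half full, and the bucketing of $S^*\setminus S$ by the index $\mu(\cdot)$ from Property~2 together with feasibility of $S^*$ in the remaining case. The only cosmetic difference is that the paper bounds $f(S\cup S_i^*)-f(S)$ bucket by bucket before summing, whereas you decompose into singleton marginals first and then bucket the weights; the resulting estimate $f(S^*)-f(S)<\frac{2dv}{1+2d}$ and the resource accounting are identical.
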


\begin{proof}
If $V$ contains at least one big element, by Lemma~\ref{lemmabigelement}, we have
$$f(S) \geq \frac{\alpha}{1+2d}\textrm{OPT};$$
otherwise, we discuss the following two cases:

\underline{Case 1:} $\sum_{j \in S}c_{i,j} \geq b/2$, for some $i \in [1,d]$. By the submodularity of $f$ and Property 1) in Lemma~\ref{2p}, we have
$$f(S) \geq \frac{2v}{b(1+2d)} \sum_{j \in S}c_{i,j} \geq \frac{v}{1+2d} \geq \frac{\alpha}{1+2d}\textrm{OPT}.$$

\underline{Case 2:} $\sum_{j \in S}c_{i,j} < b/2$, for all $i \in [1,d]$. Let $S_i^*$  be the set of elements $a_j \in S^* \backslash S$ such that $\mu(a_j) = i$, for $1 \leq i \leq d$. Then we have $S^*\setminus S=\bigcup_{1\le i\le d}S^*_i.$ With the help of the submodularity of $f$ and Property 2) in Lemma~\ref{2p}, we obtain
\begin{eqnarray*}
\begin{aligned}
	f(S \cup S_i^*) - f(S) \leq \frac{2v}{b(1+2d)} \sum_{a_j \in S_i^*}c_{\mu(a_j), a_j} < \frac{2v}{1+2d},
\end{aligned}
\end{eqnarray*}	
for $1\le i\le d$.
Then we have
\begin{eqnarray*}
\begin{aligned}
	&f(S^*)-f(S)=f(S \cup (S^* \setminus S)) - f(S)  \\
	&\leq \sum_{1\le i\le d}[f(S \cup S_i^*) - f(S)] <\frac{2dv}{1+2d},
\end{aligned}
\end{eqnarray*}	
and further,
\begin{eqnarray*}
\begin{aligned}
	f(S) > f(S^*) - \frac{2dv}{1+2d} \geq \frac{1}{1+2d} \textrm{OPT} .
\end{aligned}
\end{eqnarray*}	

In both cases, we conclude $$f(S)\ge \frac{\alpha}{1+2d}\textrm{OPT}.$$

Since we have $c_{i,j} \geq 1$ for all $i \in [1,d]$, $j \in [1,n]$, we store at most $O(b)$ elements during the algorithm. In the for-loop, we compare the values at most $d$ times. Then the computation cost per element in the algorithm is $O(d)$.
\end{proof}

We can obtain an approximation of the optimal value OPT by solving the $d$-MASK problem via Algorithm \ref{d-1} . But in certain scenarios, requiring the knowledge of an approximation to the optimization problem and utilizing the approximation in Algorithm \ref{d-1} lead to a chicken and egg dilemma. That is, we have to first estimate $\textrm{OPT}$ and then use it to compute $\textrm{OPT}$. Fortunately, even in such scenarios, we still have the following lemma to estimate $\textrm{OPT}$ if we know
$m \triangleq  \max_{1\le i\le d, 1\le j\le n} f(\{j\})/c_{i,j}$, the maximum value per weight of any single element.

\begin{Pre}
\label{L3}
Let
\begin{align*}
Q = \Big\{&\left.[1+(1+2d)\epsilon]^l \right.|l \in \mathbb{Z}, \\
&\frac{m}{1+(1+2d)\epsilon}\leq [1+(1+2d)\epsilon]^l \leq bm\Big\}
\end{align*}
 for some $\epsilon$ with $0 < \epsilon < \frac{1}{1+2d}$. Then there exists at least some $v \in Q$ such that $[1-(1+2d)\epsilon]\textrm{OPT} \leq v \leq \textrm{OPT}$.
\end{Pre}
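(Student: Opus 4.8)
The plan is to sandwich $\textrm{OPT}$ between $m$ and $bm$, and then observe that $Q$ is a geometric grid fine enough to contain a point within a factor $\gamma \triangleq 1+(1+2d)\epsilon$ below $\textrm{OPT}$. First I would establish $m \le \textrm{OPT} \le bm$ (assuming $m>0$, since otherwise every singleton has value $0$, so by subadditivity $\textrm{OPT}=0$ and the statement degenerates). For the lower bound, let $(i^{*},j^{*})$ attain the maximum defining $m$, so that $f(\{j^{*}\}) = \tfrac{f(\{j^{*}\})}{c_{i^{*},j^{*}}}\,c_{i^{*},j^{*}} = m\,c_{i^{*},j^{*}} \ge m$ because $c_{i,j}\ge 1$ after standardization; since $c_{i,j^{*}} \le b$ for all $i$, the singleton $\{j^{*}\}$ is feasible, so $\textrm{OPT} \ge f(\{j^{*}\}) \ge m$. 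For the upper bound, let $S^{*}$ be an optimal set; submodularity together with $f(\emptyset)\ge 0$ gives subadditivity $f(S^{*}) \le \sum_{j\in S^{*}} f(\{j\})$, and bounding each term against a single fixed knapsack row, $f(\{j\}) = \tfrac{f(\{j\})}{c_{1,j}}\,c_{1,j} \le m\,c_{1,j}$, yields $\textrm{OPT} = f(S^{*}) \le m\sum_{j\in S^{*}} c_{1,j} \le mb$ by feasibility of $S^{*}$. Hence $m \le \textrm{OPT} \le bm$.

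Next I would exhibit the required $v$ explicitly: set $l^{*} \triangleq \lfloor \log_{\gamma} \textrm{OPT} \rfloor \in \mathbb{Z}$ and $v \triangleq \gamma^{\,l^{*}}$. By definition of the floor, $\gamma^{\,l^{*}} \le \textrm{OPT} < \gamma^{\,l^{*}+1}$, i.e. $\textrm{OPT}/\gamma < v \le \textrm{OPT}$. Combining with the two-sided bound on $\textrm{OPT}$ from the previous step, $v \le \textrm{OPT} \le bm$ and $v > \textrm{OPT}/\gamma \ge m/\gamma$, so $v$ lies in the interval $[\,m/\gamma,\; bm\,]$ and is an integer power of $\gamma = 1+(1+2d)\epsilon$; therefore $v \in Q$.

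Finally I would translate the multiplicative slack $1/\gamma$ into the additive-looking slack $1-(1+2d)\epsilon$. Writing $\delta \triangleq (1+2d)\epsilon$, which lies in $(0,1)$ by the hypothesis on $\epsilon$, we have $(1-\delta)(1+\delta) = 1-\delta^{2} < 1$, hence $1-\delta < 1/(1+\delta) = 1/\gamma$. Consequently $v > \textrm{OPT}/\gamma > (1-\delta)\textrm{OPT} = [1-(1+2d)\epsilon]\textrm{OPT}$, and together with $v \le \textrm{OPT}$ this is exactly the asserted inequality. I do not expect a real obstacle; the only points needing care are the upper bound $\textrm{OPT}\le bm$, where one must use subadditivity (submodularity plus nonnegativity, not submodularity alone) and must fix one knapsack row \emph{before} summing over $S^{*}$ rather than mixing rows across elements, and the trivial nondegeneracy assumption $m>0$ that keeps $Q$ nonempty and $\log_\gamma \textrm{OPT}$ well defined.
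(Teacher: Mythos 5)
Your proof is correct and follows essentially the same route as the paper's: establish $m \le \mathrm{OPT} \le bm$ via singleton feasibility and subadditivity (fixing the first knapsack row), then take $v = [1+(1+2d)\epsilon]^{\lfloor \log_{1+(1+2d)\epsilon}\mathrm{OPT}\rfloor}$ and convert the multiplicative slack $1/(1+\delta)$ into $1-\delta$. Your added care about the degenerate case $m=0$ and about using nonnegativity alongside submodularity for the subadditivity step are points the paper glosses over, but the argument is the same.
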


\begin{proof}
First, choose $i'\in [1, d],$ $j'\in [1,n]$ such that $f(\{j'\})/c_{i',j'}=m.$ Since $c_{i',j'}\ge 1$, we have
$$\textrm{OPT}\ge f(\{j'\})=mc_{i',j'}\ge m.$$
Also, let $\{j_1,j_2,\ldots,j_t\}$ be a subset of $V$ such that $f(\{j_1,j_2,\ldots,j_t\})=\textrm{OPT}$. Then by the submodularity of $f$,
\begin{align*}
\textrm{OPT}&=f(\emptyset)+\sum_{i=1}^t [f(\{j_1,j_2,\ldots,j_i\})-f(\{j_1,j_2,\ldots,j_{i-1}\})]\\
&\le f(\emptyset)+\sum_{i=1}^t [f(\{j_i\})-f(\emptyset)]\\
&\le \sum_{i=1}^t f(\{j_i\})\le m\sum_{i=1}^tc_{1,j_i}\le bm.
\end{align*}
Setting $v = [1+(1+2d)\epsilon]^{\left\lfloor\log_{1+(1+2d)\epsilon}\textrm{OPT}\right\rfloor}$, we then obtain
$$\frac{m}{1+(1+2d)\epsilon}\le \frac{1}{1+(1+2d)\epsilon}\textrm{OPT}\le v\le \textrm{OPT}\le bm,$$
and
$$v\ge \frac{1}{1+(1+2d)\epsilon}\textrm{OPT}\ge [1-(1+2d)\epsilon]\textrm{OPT}.$$
\end{proof}

Based on Lemma~\ref{L3}, we propose the following algorithm that gets around the chick and egg dilemma.

\begin{algorithm}[H]
\caption{$m$-KNOWN-$d$-MASK}
\label{d-2}
\begin{algorithmic}[1]
\State Input: $m$.
\State $Q := \{ [1+(1+2d)\epsilon]^l | l \in\mathbb{Z}, $
\State \quad $\frac{m}{1+(1+2d)\epsilon} \leq [1+(1+2d)\epsilon]^l \leq b m\}$.
\NoDoFor{$v \in Q$}
    \State $S_v := \emptyset.$
\EndFor
\NoDoFor{$j := 1$  \textbf{to} $n$}
    \If {$c_{i,j} \hspace{-0.5mm}\geq \hspace{-0.5mm}\frac{b}{2}\hspace{-0.5mm}\textrm{ and }\hspace{-0.5mm}\frac{f(\{j\})}{c_{i,j}} \hspace{-0.5mm} \geq \hspace{-0.5mm} \frac{2v}{b(1+2d)}$\hspace{-0.5mm}
for \hspace{-0.5mm} some \hspace{-0.5mm} $i \in [1,d]$}
\State $S:=\{j\}.$		
\State \textbf{return} $S$.
	\EndIf
	\NoDoFor{$v \in Q$}
		\If {$\sum_{l \in S\cup \{j\}}c_{i,l} \leq b$ and $\frac{\Delta_{f}(j | S)}{c_{i,j}} \geq \frac{2v}{b(1+2d)}$ for all $i \in [1,d]$}
			\State $S_v:= S_v \cup \{ j \}.$
		\EndIf
	\EndFor
\EndFor
\State $S:=\underset{S_v, v \in Q} {\mathrm{argmax}} ~f(S_v)$.\\
\Return $S$.
\end{algorithmic}
\end{algorithm}

Then we establish the following theorem to show that the above algorithm achieves a $\left(\frac{1}{1+2d}-\epsilon\right)$-approximation guarantee, and requires $O\left(\frac{b\log b}{\epsilon}\right)$ memory and $O\left(\frac{\log b}{\epsilon}\right)$ computation complexity per element.

\begin{Performance}
\label{th2}
With $m$ known, Algorithm \ref{d-2} has the following properties:
\begin{itemize}
  \item It outputs $S$ that satisfies $f(S) \geq \left(\frac{1}{1+2d} - \epsilon \right)\textrm{OPT}$;
  \item It goes one pass over the dataset, stores at most $O\left(\frac{b\log b}{d\epsilon}\right)$ elements, and has $O\left(\frac{\log b}{\epsilon}\right)$ computation complexity per element.
\end{itemize}
\end{Performance}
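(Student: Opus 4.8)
The plan is to reduce Theorem~\ref{th2} to Theorem~\ref{th1} by using the geometric grid $Q$ to ``guess'' $\textrm{OPT}$. First I would invoke Lemma~\ref{L3}: since $m$ is supplied as input, the set $Q$ built in Lines~2--3 of Algorithm~\ref{d-2} contains some value $v^\star$ with $[1-(1+2d)\epsilon]\textrm{OPT}\le v^\star\le\textrm{OPT}$. In other words $v^\star$ is an $\alpha$-approximation of $\textrm{OPT}$ in the sense of Algorithm~\ref{d-1}, with $\alpha=1-(1+2d)\epsilon$.

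Next I would argue that the work Algorithm~\ref{d-2} does on the single copy $S_{v^\star}$ is exactly an execution of Algorithm~\ref{d-1} on input $v^\star$: the inner admission test (``$\sum_{l\in S_{v^\star}\cup\{j\}}c_{i,l}\le b$ and $\Delta_f(j|S_{v^\star})/c_{i,j}\ge 2v^\star/(b(1+2d))$ for all $i$'') is literally Line~8 of Algorithm~\ref{d-1}, and the big-element test~(\ref{defbigelement}) with $v=v^\star$ is Line~4. Consequently, whatever set the $v^\star$-copy holds at the end of the stream, Theorem~\ref{th1} (applied with $\alpha=1-(1+2d)\epsilon$), or Lemma~\ref{lemmabigelement} in the big-element case, yields $f(S_{v^\star})\ge\frac{\alpha}{1+2d}\textrm{OPT}=\frac{1-(1+2d)\epsilon}{1+2d}\textrm{OPT}=\big(\frac{1}{1+2d}-\epsilon\big)\textrm{OPT}$. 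Since the reported output is $S=\arg\max_{v\in Q}f(S_v)$ (or the returned singleton, which is at least as good in the big-element case), we get $f(S)\ge f(S_{v^\star})\ge\big(\frac{1}{1+2d}-\epsilon\big)\textrm{OPT}$, which is the first bullet.

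For the resource bounds I would estimate $|Q|$ first: the admissible exponents $l$ lie in an interval of length $\log_{1+(1+2d)\epsilon}\!\big(b(1+(1+2d)\epsilon)\big)=1+\frac{\log b}{\log(1+(1+2d)\epsilon)}$, and since $\log(1+x)=\Theta(x)$ for $x=(1+2d)\epsilon<1$, this is $O\!\big(\tfrac{\log b}{d\epsilon}\big)$. Algorithm~\ref{d-1} stores $O(b)$ elements per guessed value (Theorem~\ref{th1}), so the total memory is $O(b)\cdot|Q|=O\!\big(\tfrac{b\log b}{d\epsilon}\big)$; per stream element the work is $O(d)$ for the big-element check plus $O(d)$ per $v\in Q$, i.e.\ $O(d\,|Q|)=O\!\big(\tfrac{\log b}{\epsilon}\big)$; and the single for-loop over $j$ makes one pass. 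This gives the second bullet.

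The main obstacle, and the only nonroutine point, is the big-element branch: in Algorithm~\ref{d-2} as written the early \texttt{return} is shared across all copies $S_v$, so one must be careful that reducing the $v^\star$-thread to Algorithm~\ref{d-1} remains faithful even if some other thread would trigger (or suppress) the early return first. The cleanest way I would handle this is to read the big-element test as performed inside the \texttt{for $v\in Q$} loop (each $v$ carrying its own termination flag), so that the $v^\star$-thread is verbatim Algorithm~\ref{d-1} with input $v^\star$; and to note separately that whenever Algorithm~\ref{d-2} does output a singleton $\{a\}$, that $a$ is big for some $v\in Q$, so $f(\{a\})\ge v/(1+2d)$, which together with the fact that the grid resolution of $Q$ forces $v$ to be within a $[1-(1+2d)\epsilon]$ factor of $\textrm{OPT}$ whenever it can certify a relevant big element, still yields the claimed approximation. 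Everything else is substitution of $\alpha=1-(1+2d)\epsilon$ into the already-proved Theorem~\ref{th1}.
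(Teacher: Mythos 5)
Your proposal is correct and follows essentially the same route as the paper: invoke Lemma~\ref{L3} to find $v^\star\in Q$ with $[1-(1+2d)\epsilon]\textrm{OPT}\le v^\star\le\textrm{OPT}$, apply Theorem~\ref{th1} with $\alpha=1-(1+2d)\epsilon$ to the thread $S_{v^\star}$, and multiply the per-thread bounds of Theorem~\ref{th1} by $|Q|=O\bigl(\tfrac{\log b}{d\epsilon}\bigr)$ for the resource claims. Your extra care about the shared early-return in the big-element branch is a legitimate point the paper's proof silently skips over, and your resolution of it is sound.
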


\begin{proof}
By Lemma~\ref{L3}, we choose $v\in Q$ such that \\$[1-(1+2d)\epsilon]\textrm{OPT}\le v\le \textrm{OPT}$. Then by Theorem \ref{th1}, the output $S$ satisfies
$$f(S) \geq \frac{1-(1+2d)\epsilon}{1+2d} \textrm{OPT} = \left(\frac{1}{1+2d} - \epsilon \right)\textrm{OPT}.$$

Notice that there are at most $\left\lceil\log_{1+(1+2d)\epsilon}b\right\rceil+1$ (of order $\frac{\log b}{d \epsilon}$) elements in $Q$. At the end of the algorithm, $S_v$ with the largest function value will be picked to be the output. Since $S$ contains at most $b$ elements, Algorithm~\ref{d-2} stores at most $O\left(\frac{b\log b}{d\epsilon}\right)$ elements and has $O\left(\frac{\log b}{\epsilon}\right)$ computation complexity per element.
\end{proof}

Introducing the maximum marginal value per weight $m$ avoids the chicken and egg dilemma in Algorithm \ref{d-1}. With $m$ known, Algorithm \ref{d-2} needs only one pass over the dataset. However, we need an extra pass through the dataset to obtain the value of $m$. In the following, we will develop our final one-pass streaming algorithm with $m$ unknown.

\begin{algorithm}
\caption{$d$-KNAPSACK-STREAMING}
\label{d-3}
\begin{algorithmic}[1]
\State $Q := \{ [1+(1+2d)\varepsilon]^l | l \in\mathbb{Z}\}$.
\NoDoFor{$v \in Q$}
    \State $S_v := \emptyset.$
\EndFor
\State $m: = 0.$
\NoDoFor{$j := 1$  \textbf{to} $n$}
    \If {$c_{i,j} \hspace{-0.5mm}\geq \hspace{-0.5mm}\frac{b}{2}\hspace{-0.5mm}\textrm{ and }\hspace{-0.5mm}\frac{f(\{j\})}{c_{i,j}} \hspace{-0.5mm} \geq \hspace{-0.5mm} \frac{2v}{b(1+2d)}$\hspace{-0.5mm}
for \hspace{-0.5mm} some \hspace{-0.5mm} $i \in [1,d]$}
\State $S:=\{j\}.$	
\State \textbf{return} $S.$
	\EndIf	
\NoDoFor{$i := 1$ \textbf{to} $d$}
		\State $m := \max\{m, f(\{j\})/c_{i,j}\}$.
	\EndFor
	\State $Q := \{ [1+(1+2d)\varepsilon]^l | l \in\mathbb{Z},$
\State \quad $\frac{m}{1+(1+2d)\epsilon} \leq [1+(1+2d)\varepsilon]^l \leq 2bm \}$.
	\NoDoFor{$v \in Q$}
		\If {$\sum_{l \in S\cup \{j\}}c_{i,l} \leq b$ and $\frac{\Delta_{f}(j | S)}{c_{i,j}} \geq \frac{2v}{b(1+2d)}$ for all $i \in [1,d]$}
			\State $S_v:= S_v \cup \{ j \}.$
		\EndIf
	\EndFor
\EndFor
\State $S:=\underset{S_v, v \in Q} {\mathrm{argmax}} ~f(S_v)$.\\
\Return $S$.
\end{algorithmic}
\end{algorithm}
We modify the estimation candidate set $Q$ into $\{[1+(1+2d)\epsilon]^l | l \in\mathbb{Z}, \frac{m}{1+(1+2d)\epsilon} \leq [1+(1+2d)\epsilon]^l \leq 2bm\}$, and maintain the variable $m$ that holds the current maximum marginal value per weight of all single element. During the data streaming, if a big element $a$ is observed, the algorithm simply outputs $\{a\}$ and terminates. Otherwise, the algorithm will update $m$ and the estimation candidate set $Q$. If the marginal value per weight for each knapsack constraint $\Delta_f(j|S)/c_{i,j}$ is at least \mbox{$2v/b(1+2d)$} for $1\le i\le d$, and the overall $d$-knapsack constraint is still satisfied, then an element $j$ is added to the corresponding candidate set. Then we establish the following theorem, which shows the property of the output of Algorithm 4. Its proof follows the same lines as the proof of Theorem~\ref{th2}.

\begin{Performance}
\label{th4}
Algorithm \ref{d-3} has the following properties:
\begin{itemize}
  \item It outputs $S$ that satisfies that $f(S) \geq \left(\frac{1}{1+2d} - \epsilon \right)\textrm{OPT}$;
  \item It goes one pass over the dataset, stores at most $O\left(\frac{b\log b}{d\epsilon}\right)$ elements, and has $O\left(\frac{\log b}{\epsilon}\right)$ computation complexity per element.
\end{itemize}
\end{Performance}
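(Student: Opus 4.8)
The plan is to derive Theorem~\ref{th4} from Lemma~\ref{L3} together with the decomposition argument of Theorem~\ref{th1}, the one new feature being that the running maximum $m$ --- hence the candidate set $Q$ --- is discovered online rather than supplied. First I would dispose of the big-element case exactly as in Lemma~\ref{lemmabigelement}: if some element triggers the big-element test, Algorithm~\ref{d-3} returns a singleton $\{j\}$ with $f(\{j\})\ge\frac{v}{1+2d}$ for the witnessing $v$, which is at least $\left(\frac{1}{1+2d}-\epsilon\right)\textrm{OPT}$ once the good estimate from Lemma~\ref{L3} is available. Henceforth assume no big element appears, so the loop runs to completion and the output is $S=\arg\max_{v\in Q}f(S_v)$ over the final $Q$.

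Next I would single out the good threshold. Applying Lemma~\ref{L3} to the final value of $m$ gives a power $v^*$ of $1+(1+2d)\epsilon$ with $[1-(1+2d)\epsilon]\textrm{OPT}\le v^*\le\textrm{OPT}$, and since the proof of Lemma~\ref{L3} also shows $m\le\textrm{OPT}\le bm$, the threshold $v^*$ lies in the final $Q$; moreover $m$ is nondecreasing, so the lower endpoint $\frac{m}{1+(1+2d)\epsilon}$ of $Q$ never rises above $v^*$ and $v^*$ is never dropped once it has entered $Q$. Call the step at which $v^*$ first enters $Q$ the activation step; since $S_{v^*}$ is updated only while $v^*\in Q$, it is still empty then, and afterwards it is updated exactly by the selection rule of Algorithm~\ref{d-2} with threshold $v^*$. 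Thus $S_{v^*}$ admits an ordering satisfying Property~1 of Lemma~\ref{2p} (each inserted element passed the inner-loop test, which is~(\ref{condition1}) with $v=v^*$), and, in Case~2 of Theorem~\ref{th1}'s proof, every element of $S^*\setminus S_{v^*}$ processed after activation satisfies Property~2 of Lemma~\ref{2p} by the argument given there. The genuinely new point is the elements $a\in S^*$ processed before activation: at that moment $v^*$ exceeds the upper endpoint of $Q$, which --- provided that upper endpoint is a large enough multiple of $m$, of order $(1+2d)b\,m$ --- forces the running maximum $m$, and hence $f(\{a\})/c_{i,a}$ for every $i$, to be below the acceptance threshold $\frac{2v^*}{b(1+2d)}$. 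Using $\Delta_f(a|S_{v^*})\le f(\{a\})$, each such $a$ can then be assigned to a single weight class, say $i=1$, without breaking $\sum_a c_{1,a}\le b$; re-running the decomposition of Theorem~\ref{th1} with these elements folded into the existing $d$ classes still gives $f(S^*)-f(S_{v^*})<\frac{2dv^*}{1+2d}$. Hence in every case $f(S)\ge f(S_{v^*})\ge\frac{v^*}{1+2d}\ge\frac{[1-(1+2d)\epsilon]\textrm{OPT}}{1+2d}=\left(\frac{1}{1+2d}-\epsilon\right)\textrm{OPT}$.

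The resource bounds follow just as for Theorem~\ref{th2}. The final $Q$ consists of the powers of $1+(1+2d)\epsilon$ inside an interval whose endpoint ratio is $O\big((1+2d)b\big)$, so $|Q|=O\big(\frac{\log b}{d\epsilon}\big)$; each $S_v$ holds at most $b$ elements because $c_{i,j}\ge 1$ after standardization, giving $O\big(\frac{b\log b}{d\epsilon}\big)$ stored elements; and per element the cost is the $O(d)$ big-element test, the $O(d)$ update of $m$, and an $O(d)$ feasibility-and-threshold check for each of the $O\big(\frac{\log b}{d\epsilon}\big)$ values of $v$, for a total of $O\big(\frac{\log b}{\epsilon}\big)$; the single pass is immediate from the loop structure.

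The step I expect to be the crux is the before-activation part of the second paragraph: one must ensure that building $Q$ from a running $m$ cannot make $S_{v^*}$ skip an element that the fixed-threshold run of Algorithm~\ref{d-2} with $v=v^*$ would have kept. This hinges entirely on the upper endpoint of $Q$ being at least of order $(1+2d)b\,m$ --- so that $v^*$ is activated no later than the first element whose per-weight value can reach the acceptance threshold $\frac{2v^*}{b(1+2d)}$ --- and one should check that this is compatible with $\textrm{OPT}\le bm$ from Lemma~\ref{L3}, adjusting the constant in the definition of $Q$ if needed. Everything else is a direct transcription of the proofs of Lemmas~\ref{lemmabigelement} and~\ref{2p} and Theorems~\ref{th1} and~\ref{th2}.
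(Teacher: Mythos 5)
Your overall route is the one the paper intends: reduce to Lemma~\ref{L3} and the decomposition of Theorem~\ref{th1}, the only new issue being that $Q$ is built from a running maximum $m$. (The paper's own proof of Theorem~\ref{th4} is a one-line appeal to Theorem~\ref{th2} and does not address this at all, so your proposal is in fact more careful than the paper.) You correctly isolate the crux: an element of $S^*$ processed before $v^*$ enters $Q$ is skipped for a reason other than failing the threshold test, so Property~2 of Lemma~\ref{2p} does not automatically apply to it.

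One step as written does not close. Before activation the upper endpoint $2bm$ only yields $f(\{a\})/c_{i,a}\le m< v^*/(2b)$, and $v^*/(2b)$ exceeds the acceptance threshold $\frac{2v^*}{b(1+2d)}$ as soon as $d\ge 2$. Folding such elements into a single weight class then bounds that class's contribution only by $\frac{v^*}{2b}\cdot b=\frac{v^*}{2}$ rather than $\frac{2v^*}{1+2d}$, so the total becomes $f(S^*)-f(S_{v^*})<\frac{2(d-1)v^*}{1+2d}+\frac{v^*}{2}$, which is strictly worse than $\frac{2dv^*}{1+2d}$ for $d\ge 2$ and destroys the $\frac{1}{1+2d}$ factor (for $d\ge 3$ it gives nothing at all). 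Hence the caveat you append --- ``adjusting the constant in the definition of $Q$ if needed'' --- is not optional but is the actual content of the step: the upper endpoint must be at least $\frac{(1+2d)bm}{2}$. Since Algorithm~\ref{d-3} updates $m$ with element $j$'s own ratios before $j$ is tested, this endpoint guarantees that whenever $j$ could pass the acceptance test for $v^*$ (which forces $f(\{j\})/c_{i,j}\ge\frac{2v^*}{b(1+2d)}$ for all $i$, hence $m\ge\frac{2v^*}{b(1+2d)}$), the value $v^*$ is already in $Q$; every pre-activation element then satisfies $f(\{a\})/c_{i,a}<\frac{2v^*}{b(1+2d)}$ outright, Property~2 of Lemma~\ref{2p} holds for it with any index, and no folding is needed --- the proof of Theorem~\ref{th1} applies verbatim. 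With the endpoint so enlarged ($bm\le\frac{(1+2d)bm}{2}$ still places $v^*$ in the final $Q$, and $|Q|$ changes only by a constant factor), your argument and your resource counts are correct; for $d=1$ the paper's $2bm$ already suffices and nothing changes.
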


\subsection{Online Bound}\label{sectiononline}
To evaluate the performance of our proposed algorithms, we need to compare the function values obtained by our streaming algorithm against OPT, by calculating their relative difference. Since OPT is unknown, we could use an upper bound of OPT to evaluate the performance of the proposed algorithms. 

By Theorem~\ref{th4}, we obtain
\begin{equation}\label{theofflinebound}
\textrm{OPT} \leq \frac{1+2d}{1-(1+2d)\epsilon} f(S).
\end{equation}
Then $\frac{1+2d}{1-(1+2d)\epsilon} f(S)$ is an upper bound of the optimal value to the $d$-MASK problem.  In most of cases, this bound is not tight enough. In the following, we provide a much tighter bound derived by the submodularity of $f$.

\begin{Performance}\label{theoremforoutbound}
Consider a subset $S \subseteq V$. For $1\le i\le d$, let $r_{i,s}=\Delta_f(s|S)/c_{i,s}$, and $s_{i,1}, \ldots, s_{i,|V\setminus S|}$ be the sequence such that $r_{i,s_{i,1}}\ge r_{i,s_{i,2}}\ge \cdots \ge r_{i,s_{i,|V\setminus S|}}.$ Let $k_i$ be the integer such that $\sum_{j = 1}^{k_i-1}c_{i,s_{i,j}} \leq b$ and $\sum_{j = 1}^{k_i}c_{i,s_{i,j}} > b$. And let $\lambda_i = \left.\left(b-\sum_{j = 1}^{k_i-1}c_{i,s_{i,j}}\right)\right/c_{i,s_{i,k_i}}$. Then we have
\begin{align}\label{outboundtheorem}
   \emph{OPT}=\max_{C\bold{x_{S'}} \leq \bold{b}}&\ f(S')\leq f(S) \nonumber \\
   +\min_{1\le i\le d} &\left[\sum_{j = 1}^{k_i-1}\Delta_f(s_{i,j}|S)+\lambda_i \Delta_f(s_{i,k_i}|S)\right].
\end{align}
\end{Performance}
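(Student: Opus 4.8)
The plan is to bound $f(S')$ for an arbitrary feasible $S'$ (i.e.\ with $C\mathbf{x}_{S'}\le\mathbf{b}$) by the right-hand side, and then take the maximum over all such $S'$. First I would strip $S'$ down relative to $S$ using monotonicity and submodularity: since $f$ is monotone, $f(S')\le f(S\cup S')$; writing the elements of $S'\setminus S$ in an arbitrary order $t_1,\dots,t_m$ and telescoping, $f(S\cup S')-f(S)=\sum_{\ell=1}^{m}\Delta_f(t_\ell\,|\,S\cup\{t_1,\dots,t_{\ell-1}\})\le\sum_{\ell=1}^{m}\Delta_f(t_\ell\,|\,S)$, where the inequality is the diminishing-returns property applied to $S\subseteq S\cup\{t_1,\dots,t_{\ell-1}\}$. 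Hence $f(S')\le f(S)+\sum_{s\in S'\setminus S}\Delta_f(s\,|\,S)$.

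Next, fix a coordinate $i\in[1,d]$ and rewrite each marginal via its density: $\Delta_f(s\,|\,S)=r_{i,s}\,c_{i,s}$, so $\sum_{s\in S'\setminus S}\Delta_f(s\,|\,S)=\sum_{s\in S'\setminus S}r_{i,s}\,c_{i,s}$. Feasibility of $S'$ gives $\sum_{s\in S'\setminus S}c_{i,s}\le\sum_{s\in S'}c_{i,s}\le b$, and monotonicity gives $r_{i,s}\ge 0$ for all $s$. Thus $S'\setminus S$ is a feasible integral solution of the knapsack instance on ground set $V\setminus S$ with weights $c_{i,s}$, profits $r_{i,s}c_{i,s}$, and capacity $b$. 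I would then invoke the standard fact that the value of such a $0$--$1$ knapsack is at most the value of its greedy fractional solution: fill the knapsack with the items of $V\setminus S$ in order of non-increasing density, which is exactly the ordering $s_{i,1},s_{i,2},\dots$, and take a $\lambda_i$-fraction of the first item $s_{i,k_i}$ that overflows. That fractional value is precisely $\sum_{j=1}^{k_i-1}r_{i,s_{i,j}}c_{i,s_{i,j}}+\lambda_i r_{i,s_{i,k_i}}c_{i,s_{i,k_i}}=\sum_{j=1}^{k_i-1}\Delta_f(s_{i,j}\,|\,S)+\lambda_i\Delta_f(s_{i,k_i}\,|\,S)$. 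Combining with the first paragraph, $f(S')\le f(S)+\sum_{j=1}^{k_i-1}\Delta_f(s_{i,j}\,|\,S)+\lambda_i\Delta_f(s_{i,k_i}\,|\,S)$ holds for every $i$; taking the minimum over $i$ and then the supremum over feasible $S'$, which equals $\textrm{OPT}$, yields (\ref{outboundtheorem}). (The degenerate case in which $\sum_{j=1}^{|V\setminus S|}c_{i,s_{i,j}}\le b$, so nothing overflows, is handled by the convention $k_i=|V\setminus S|+1$, $\lambda_i=0$, where the bound reduces to $f(S')\le f(S)+\sum_{s\in V\setminus S}\Delta_f(s\,|\,S)$, immediate from the first paragraph.)

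The only step that requires genuine care rather than bookkeeping is the fractional-knapsack upper bound. The clean justification is an exchange argument: for any $T\subseteq V\setminus S$ of total weight at most $b$, process the density-sorted list $s_{i,1},s_{i,2},\dots$ and charge each unit of $T$'s weight to the greedy fill's weight of the same rank; since densities are non-increasing along the list, each unit of weight in the greedy fill is worth at least as much as the corresponding unit devoted to $T$, so $\sum_{s\in T}r_{i,s}c_{i,s}$ cannot exceed the greedy fractional value. Equivalently, the integral selection is a feasible point of the fractional-knapsack LP whose optimum is attained by the greedy solution, so I would either cite this classical fact or insert the two-line exchange proof. Everything else---monotonicity, submodular telescoping, and the identity $\Delta_f(s\,|\,S)=r_{i,s}c_{i,s}$---is routine.
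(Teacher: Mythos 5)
Your proof is correct, and it follows the same overall skeleton as the paper's: bound the optimum by $f(S)+\sum_{s}\Delta_f(s|S)$ via monotonicity and diminishing returns, observe that the summation set is feasible for each individual knapsack, and cap the resulting modular sum by the greedy fractional-knapsack value, finally taking the minimum over $i$. The differences are in execution, and they mostly favor your version. First, the paper routes the argument through the optimal solutions $S_i^*$ of the $d$ relaxed single-knapsack problems and chains $f(S^*)\le\min_i f(S_i^*)$; you instead bound an arbitrary feasible $S'$ directly, using that feasibility for the $d$-knapsack implies $\sum_{s\in S'\setminus S}c_{i,s}\le b$ for every $i$ --- a small but genuine streamlining that removes an unnecessary layer of optima. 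Second, and more substantively, the paper justifies the fractional-knapsack step by replicating each element $s$ into $c_{i,s}$ unit-cost copies, which forces it to first assume all weights and budgets are rational (clearing denominators by an LCM) and then patch the irrational case with a separate limiting argument; your exchange/LP argument for the optimality of the greedy fractional solution works for arbitrary real weights in one stroke and eliminates that case split entirely. You also explicitly handle the degenerate case where all of $V\setminus S$ fits in the budget (so $k_i$ as defined does not exist), which the paper leaves implicit. The one thing you should make sure to actually write out, rather than merely cite, is the two-line exchange argument for the fractional bound, since it is the only nontrivial step; with that included, your proof is complete and arguably cleaner than the published one.
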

\begin{proof}
Here we use a similar proof as the proof of Theor\mbox{em~8.3.3} in \cite{gomez2010inferring}, where the author deals with the submodular maximization problem under one knapsack constraint. Let $S^*$ be the optimal solution to Problem~(\ref{overallproblem}). First we consider the $1$-MASK problem, which has the same objective function as Problem~(\ref{overallproblem}) but only with the $i$-th knapsack constraint. Assume ${S_i^{*}}$ is its optimal solution. Since this $1$-MASK problem has fewer constraints than Problem~(\ref{overallproblem}), we have $f(S^*)\leq f(S_i^{*})$. Hence,
\begin{align}
\label{ex}
	f(S^*) \leq \min_{1\le i\le d}f(S_i^{*}).
\end{align}
Since $f$ is monotone submodular, for $1\le i\le d$,
\begin{align}
\label{bound2}
   f(S_i^*)\leq f(S\cup  {S_i^{*}}) \leq f(S) + \sum_{s \in  {S_i^{*}}} \Delta_f(s|S).
\end{align}
We first assume that all weights $c_{i,j}$ and knapsack $b$ are rational numbers. For the $i$-th $1$-MASK problem, we can multiply all $c_{i,j}$ and $b$ by the least common multiple of their denominators, making each weight and budget be an integer. We then replicate each element $s$ in $V$ into $c_{i,s}$ copies. Let $s'_i$ denote any one copy of $s$, and let $V_i'$ and ${S_i^{*}}'$ be the sets of the copies of all elements in $V$ and ${S_i}^*$, respectively. Also, define $\Delta'_f(s'_i|S)\triangleq \Delta_f(s|S)/c_{i,s}$. Then
\begin{align}
\label{bound3}
   \sum_{s\in  {S_i^{*}}} \Delta_f(s|S) &= \sum_{s'_i\in {S_i^{*}}'}\Delta'_f(s_i'|S) \nonumber \\
   &\leq \max_{K'\subseteq V_i', |K'| \leq b} \sum_{s_i'\in K'}\Delta'_f(s_i'|S).
\end{align}
To find the value of the right-hand side of (\ref{bound3}), we actually need to
solve a unit-cost modular optimization problem as follows. We first sort all elements $s'$ in $V_i'$ such that the
corresponding values $\Delta'_f(s'|S)$ form a non-increasing sequence. In this sequence, the first $b$ elements are $c_{i,s_{i,j}}$ copies of $s_{i,j}$ for $1\le j\le k_i-1$, and $\left(b-\sum_{j = 1}^{k_i-1}c_{i,s_{i,j}}\right)$ copies of $s_{i,k_i}$. Therefore, we obtain
\begin{align}
\label{bound4}
   \max_{\begin{subarray}{c}K'\subseteq V_i' \\ |K'| \leq b \end{subarray}} \sum_{s_i'\in K'}\Delta'_f(s_i'|S) = \sum_{j = 1}^{k_i-1}\Delta_f(s_{i,j}|S)+\lambda_i \Delta(s_{i,k_i}|S).
\end{align}
Combining (\ref{ex}), (\ref{bound2}), (\ref{bound3}) and (\ref{bound4}), we obtain (\ref{outboundtheorem}).

For irrational weights and knapsacks, let $\left\{c_{i,s_{i,j},t}\right\}_{t=1}^\infty$ and $\left\{b_{t}\right\}_{t=1}^\infty$ be two rational sequences with limits $c_{i,s_{i,j}}$ and $b$, respectively. And further let $k_{i,t}$ be the integer such that $\sum_{j = 1}^{k_{i,t}-1}c_{i,s_{i,j},t} \leq b_t$ and $\sum_{j = 1}^{k_{i,t}}c_{i,s_{i,j},t} > b_t$, and let
$$\lambda_{i,t}= \left.\left(b_t-\sum_{j = 1}^{k_{i,t}-1}c_{i,s_{i,j},t}\right)\right/c_{i,s_{i,k_{i,t}},t}.$$ Then $\left\{\lambda_{i,t}\right\}_{t=1}^\infty$ is a rational sequence with limit $\lambda_i$. According to the above argument, we obtain for each $t$,
\begin{align*}
   \max_{C\bold{x_{S'}}\leq \bold{b}} f(S')\leq & \ f(S) \\
+\min_{1\le i\le d} &\left[\sum_{j = 1}^{k_i-1}\Delta_f(s_{i,j}|S)+\lambda_{i,t} \Delta_f(s_{i,k_i}|S)\right].
\end{align*}
By letting $t$ go to infinity, we then finish the proof.
\end{proof}
A bound is called to be \emph{offline} \cite{gomez2010inferring} if it can be stated before we run the algorithm; otherwise, it is an \emph{online} one \cite{gomez2010inferring}. Here, we obtain an offline bound (\ref{theofflinebound}) and an online bound (\ref{outboundtheorem}), the latter of which can be calculated by the following algorithm.

\begin{algorithm}[H]
\caption{Online Bound of the $d$-MASK Problem}
\label{dbound}
\begin{algorithmic}[1]
\State \textbf{Input}: $S.$
\NoDoFor {$i := 1$ $\textbf{to}$ $d$}
	\State $S_i' := \emptyset$.
	\NoDoFor {$s$ \textbf{in} $V$}
    \State $r_{i,s} := \Delta_f(s|S)/c_{i,s}.$
    \EndFor
	\NoDoWhile{$\{s\in V\setminus(S\cup S_i')|\sum_{j\in{S \cup S_i'\cup\{s\}}}c_{i,j}\leq b\}\ne \emptyset$}
		\State $s' := \underset{s\in V \backslash (S \cup S_i'), \sum_{j\in{S \cup S_i'\cup\{s\}}}c_{i,j}\leq b} {\mathrm{argmax}} ~r_{i,s}$.
		\State $S_i' := S_i' \cup \{s'\}$.
	\EndWhile
	\State $s' := \underset{s\in V \backslash (S \cup S_i'), \sum_{j\in{S \cup S_i'\cup\{s\}}}c_{i,j}\leq b} {\mathrm{argmax}} ~r_{i,s}$.
	\State $\lambda_i:=(b-\sum_{s \in S_i'}c_{i,s})/c_{i,s'}$.
	\State $\delta_i := \sum_{s \in S_i'}{\Delta_f(s|S)}+\lambda_i \Delta_f(s'|S)$.
\EndFor\\
\Return $f(S)+\min_{1\le i\le d}\delta_i$.
\end{algorithmic}
\end{algorithm}

\subsection{Problems with Ground-Set Dependent Submodular Functions}\label{sectiondependent}
In the previous sections, we have discussed the case when the submodular function $f$ is independent of the ground set $V$. In the following, we will discuss the setting where $f$ is additively decomposable \cite{mirzasoleiman2013distributed}, and the value of $f(S)$ depends on not only the subset $S$ but also the ground set $V$. Here a function $f$ is called to be \emph{additively decomposable} \cite{mirzasoleiman2013distributed} over the ground set $V$, if there exists a family of functions $\{f_i\}_{i=1}^{|V|}$ with $f_i:2^V\to [0,\infty)$ independent of the ground set $V$ such that
\begin{align}
\label{indep_f}
f(S) = \frac{1}{|V|}\sum_{i \in V}f_i(S).
\end{align}
Algorithm 4 is still useful for the case when $f$ is dependent on the ground set but additively decomposable. To reduce the computational complexity, we randomly choose a small subset $\widetilde{V}$ of $V$, and use
$$f_{\widetilde{V}}(S) \triangleq \frac{1}{|\widetilde{V}|}\sum_{i \in \widetilde{V}}f_i(S)$$
instead of $f$ in Algorithm $4$. It can be proved that with a high probability, we can still obtain a good approximation to the optimal solution, when $f_i$'s are bounded. The accuracy of the approximation is quantified by the following theorem.

\begin{Performance}
\label{bx}
Assume that for $S\subseteq V$ and $1\le i\le n$, $|f_i(S) |\leq 1$. We uniformly choose a subset $\widetilde{V}$ from $V$, with
$$ |\widetilde{V}| \geq 2\epsilon ^{-2} b^2\left( b\log |V|+\log (2/\delta)\right),$$
and use $f_{\widetilde{V}}$ instead of $f$ in Algorithm \ref{d-3}. Then with probability of at least $1-\delta$, the output $S$ of Algorithm \ref{d-3} satisfies
$$f_{\widetilde{V}}(S) \ge \left(\frac{1}{1+2d}-\epsilon\right)(\textrm{OPT}-\epsilon).$$
\end{Performance}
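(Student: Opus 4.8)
The plan is to prove that, with probability at least $1-\delta$, the subsampled objective $f_{\widetilde V}$ uniformly approximates $f$ over all feasible sets, and then to invoke Theorem~\ref{th4} ``inside'' this good event. The starting observation is that $f_{\widetilde V}=\frac{1}{|\widetilde V|}\sum_{i\in\widetilde V}f_i$ is a nonnegative monotone submodular function independent of the ground set (each $f_i$ being monotone submodular and independent of $V$, cf.~(\ref{indep_f})), so Theorem~\ref{th4} applies verbatim with $f$ replaced by $f_{\widetilde V}$: when Algorithm~\ref{d-3} is run on $f_{\widetilde V}$ it returns a set $S$ with $f_{\widetilde V}(S)\ge\left(\frac{1}{1+2d}-\epsilon\right)\textrm{OPT}_{\widetilde V}$, where $\textrm{OPT}_{\widetilde V}$ is the optimal value of the $d$-MASK problem~(\ref{overallproblem}) with $f$ replaced by $f_{\widetilde V}$. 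It then suffices to show that $\textrm{OPT}_{\widetilde V}\ge\textrm{OPT}-\epsilon$ with probability at least $1-\delta$.

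The first step is a pointwise concentration bound. Fix a feasible set $S'$. Since $\widetilde V$ is a uniformly random subset of $V$, each $i\in V$ lies in $\widetilde V$ with probability $|\widetilde V|/|V|$, so $\mathbb{E}\big[f_{\widetilde V}(S')\big]=\frac{1}{|V|}\sum_{i\in V}f_i(S')=f(S')$, and $f_{\widetilde V}(S')$ is an average of $|\widetilde V|$ samples of the bounded quantities $f_i(S')\in[-1,1]$. By Hoeffding's inequality (valid, with the same constant, for sampling without replacement),
\begin{equation*}
\Pr\!\left[\,\big|f_{\widetilde V}(S')-f(S')\big|>\epsilon/b\,\right]\le 2\exp\!\left(-\frac{|\widetilde V|\,\epsilon^{2}}{2b^{2}}\right).
\end{equation*}

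The second step is a union bound over all feasible sets. Because the standardized instance satisfies $c_{i,j}\ge1$ for all $i,j$, every feasible set has at most $b$ elements, so there are at most $\sum_{k=0}^{b}\binom{|V|}{k}\le(|V|+1)^{b}$ of them. Combining this with the pointwise estimate, the probability that $\big|f_{\widetilde V}(S')-f(S')\big|>\epsilon/b$ for at least one feasible $S'$ is at most $2(|V|+1)^{b}\exp\!\big(-|\widetilde V|\epsilon^{2}/(2b^{2})\big)$, and substituting the hypothesis $|\widetilde V|\ge 2\epsilon^{-2}b^{2}\big(b\log|V|+\log(2/\delta)\big)$ drives this below $\delta$. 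Hence on an event $\mathcal{E}$ with $\Pr(\mathcal{E})\ge1-\delta$ we have $\big|f_{\widetilde V}(S')-f(S')\big|\le\epsilon/b$ simultaneously for every feasible $S'$.

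Finally I would chain the inequalities on $\mathcal{E}$. Let $S^{*}$ attain $f(S^{*})=\textrm{OPT}$; it is feasible, so $\textrm{OPT}_{\widetilde V}\ge f_{\widetilde V}(S^{*})\ge f(S^{*})-\epsilon/b=\textrm{OPT}-\epsilon/b\ge\textrm{OPT}-\epsilon$, using $b\ge1$. Since $0<\epsilon<\frac{1}{1+2d}$ (as in Lemma~\ref{L3}), the factor $\frac{1}{1+2d}-\epsilon$ is positive, so
\begin{equation*}
f_{\widetilde V}(S)\ge\left(\frac{1}{1+2d}-\epsilon\right)\textrm{OPT}_{\widetilde V}\ge\left(\frac{1}{1+2d}-\epsilon\right)(\textrm{OPT}-\epsilon),
\end{equation*}
which is the claim. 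The only nonroutine step is the uniform concentration: one must absorb exponentially many feasible sets into a single union bound, which is where the $b\log|V|$ contribution to the required sample size comes from, while taking the per-set deviation to be $\epsilon/b$ is what accounts for the extra factor $b^{2}$. Once $f_{\widetilde V}$ has been verified to meet the hypotheses of Theorem~\ref{th4}, everything else is bookkeeping.
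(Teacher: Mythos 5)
Your proof is correct and is essentially the argument the paper intends: the paper omits the proof entirely, deferring to Theorem~6.2 of \cite{badanidiyuru2014streaming}, and that argument is exactly your Hoeffding-plus-union-bound over the feasible sets (each of size at most $b$ after standardization, since $c_{i,j}\ge 1$), followed by applying the deterministic guarantee of Theorem~\ref{th4} to $f_{\widetilde{V}}$ and transferring $\textrm{OPT}_{\widetilde{V}}\ge \textrm{OPT}-\epsilon$ via the optimal set $S^*$. Two minor remarks: you rightly flag that each $f_i$ must itself be monotone submodular for Theorem~\ref{th4} to apply to $f_{\widetilde{V}}$ (the paper's definition of additive decomposability leaves this implicit), and since the stated conclusion bounds $f_{\widetilde{V}}(S)$ rather than $f(S)$, concentration at the single fixed set $S^*$ would already suffice --- the uniform bound over all feasible sets is only needed to additionally convert the guarantee back to the true objective $f$.
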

Its proof follows the similar argument as the proof of Theorem 6.2 in \cite{badanidiyuru2014streaming}, where the authors deal with the submodular maximization problem under one cardinality constraint. Now we adopt a two-pass streaming algorithm for the $d$-MASK problem with ground-set dependent submodular objective functions: in the first pass, we utilize reservoir sampling \cite{vitter1985random} to sample an evaluation set $\widetilde{V}$ randomly; in the second pass, we run Algorithm \ref{d-3} with the objective function $f_{\widetilde{V}}$ instead of $f$.

\section{Applications}
In this section, we discuss two real-world applications for Algorithm \ref{d-3}: news recommendation and scientific literature recommendation.

\subsection{News Recommendation}
Nowadays, people are facing many news articles on the daily basis, which highly stresses their limited reading time. A news recommendation system helps people quickly fetch the information they need. Specifically, it provides the most relevant and diversified news to people by exploiting their behaviors, considering their reading preferences, and learning from their previous reading histories.

However, the vast amount of news articles in the dataset are hard to be processed efficiently. In \cite{raman2012online}, the authors modeled the user behavior as a submodular maximization problem. Based on the learning result, a classical greedy algorithm \cite{nemhauser1978analysis} was implemented to provide a set of relevant articles to the users. However, the large amount of data in the dataset prevents the classical greedy algorithm from producing the solution in time due to its expensive computation cost. Besides, the reading behavior of the users was oversimplified in \cite{raman2012online}, where it is assumed that each user reads a fixed number of articles per day. Since the time spent on different news articles varies, it is more reasonable to use the number of words of the articles as the measure of the reading behaviour. Hence, we can formulate this question into a $1$-MASK problem as follows:
\begin{equation*}
\begin{aligned}
& \underset{S \subseteq V}{\textrm{maximize}}
& & f(S) = \mathbf{w}^T\mathbf{F}(S) \\
& \textrm{subject to}
& & \sum_{j \in S}c_j \leq b,
\end{aligned}
\end{equation*}
where $c_j$ is the number of words in article $j$. Here $\mathbf{F}: 2^V\to [0,\infty)^m$, where $m$ is the number of features. We require the total number of words in the selected articles not to exceed a specified budget $b$, due to the limitation of the user reading time. In addition, we assume that the non-negative parameter vector $\mathbf{w}$ is learnt by a statistical learning algorithm, based on the historical user preference (three such learning algorithms can be found in \cite{raman2012online}, \cite{li2011scene}, and \cite{ijntema2010ontology}, respectively). Let $(\phi_1(d),\ldots,\phi_m(d))$ be the characteristic vector of article $d$, where for $1\le j\le m$, $\phi_j(d)=1$ if $d$ has feature $j$, $\phi_j(d)=0$, otherwise. We then define $\mathbf{F}(S)=(F_1(S),\ldots,F_m(S))$; here for $1\le j\le m$, $F_j(S)$ is the aggregation function of $S$ with respect to feature $j$ and defined by
$$F_j(S)\triangleq\log\left(1+\sum_{s\in S} \phi_j(s)\right).$$
This choice of function $F_j$ guarantees both precision and coverage of the solution set. On one hand, the monotonicity of $F_j(S)$ encourages feature $j$ to be selected if its corresponding weighting parameter $w_j$ (the $j$-th coordinate of the vector $\mathbf{w}$) is relatively large. On the other hand, the diminishing return property of $F_j$ prevents too many items with feature $j$ from being selected.

Notice that function $F_j$ is a monotone submodular function. To see this, let
$$G_j(S)\triangleq \sum_{s\in S}\phi_j(s).$$
Obviously, $G_j(S)$ is a non-decreasing modular function. With the fact that $\zeta(x)\triangleq\log(1+x)$ is an increasing concave function, we can conclude that $F_j(S)=\zeta(G_j(S))$ is a monotone submodular function. Since both monotonicity and submodularity are closed under the non-negative linear combinations \cite{fujishige2005book}, $f$ is a monotone submodular function as well. The solution based on Algorithm \ref{d-3} to this $1$-MASK problem provides the user a quick news recommendation.

As an illustration, we analyze the dataset collected in \cite{wolfe2010interaction}, which contains over $7,000$ feedback entries from $25$ people with around $8,000$ news articles. We set $m=480$ and $b=20$, with each entry of $C$ randomly chosen from a uniform distribution over $\{1,2,3,4,5\}$. The learning algorithm proposed in \cite{raman2012online} is used to calculate $\mathbf{w}$. We then compare Algorithm \ref{d-3} with the greedy algorithm in \cite{sviridenko2004note}.
\begin{figure}
   \centering
   \includegraphics[width=1\linewidth]{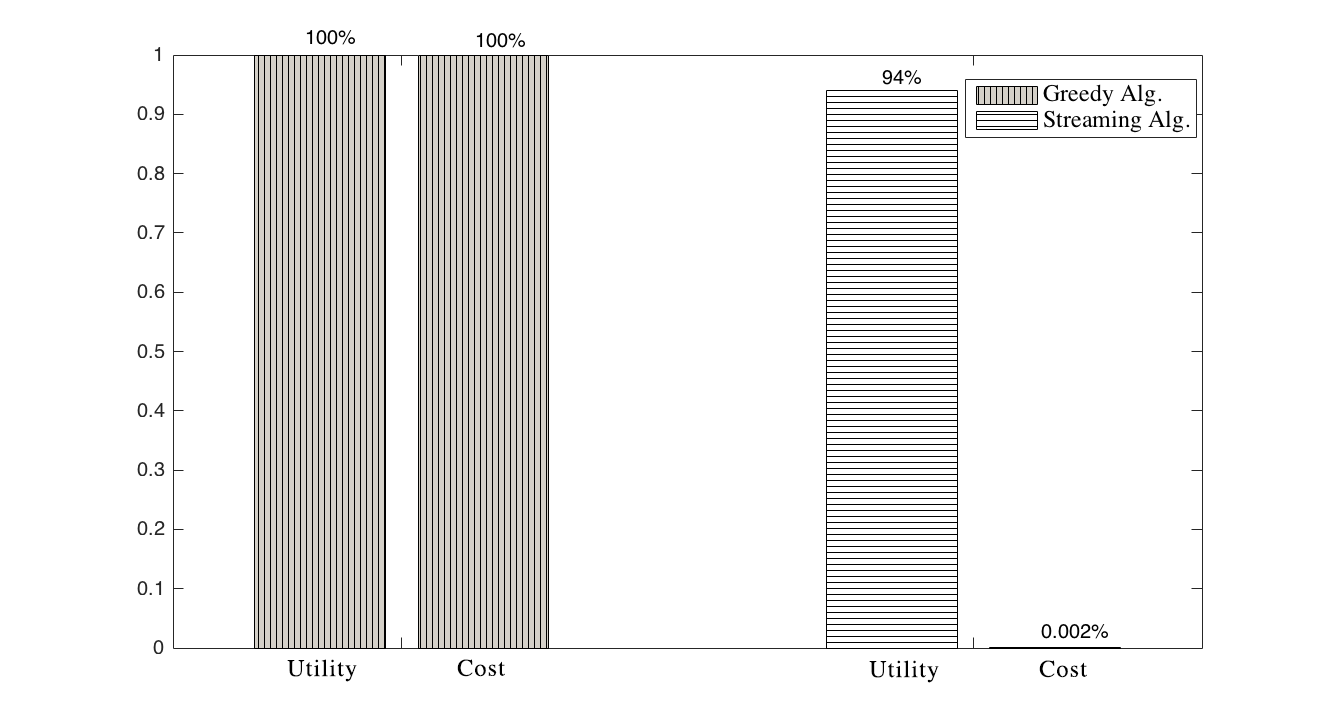}
   \caption{Comparison of Utilities and Computation Costs between the Greedy Algorithm and Streaming Algorithm}
   \label{fig:bar1}
\end{figure}

In Fig. \ref{fig:bar1}, we set the objective function value obtained by the classical greedy algorithm and its computation time both to be $1$, after using them to normalize the function value and computation time corresponding to our streaming algorithm, respectively. It has been shown that our streaming algorithm achieves $94\%$ utility of the greedy algorithm, but only requires a tiny fraction of the computation cost. Thus the proposed algorithm works well in the news recommendation system and is practically useful over large datasets.

\subsection{Scientific Literature Recommendation}
Next, we introduce an application in scientific literature recommendation. Nowadays, the researchers have to face an enormous amount of articles and collect information that they are interested in, where they have to filter the massive existing scientific literatures and pick the most useful ones. A common approach to locate the targeted literatures is based on the so-called citation networks \cite{mcnee2002recommending}. The authors in \cite{mcnee2002recommending} mapped a citation network onto a rating matrix to filter research papers.  In \cite{gori2006research}, an algorithm utilizing the random-walker properties was proposed. It transforms a citation matrix into a probability transition matrix and outputs the entries with the highest biased PageRank scores.

We here propose a new scientific literature recommendation system based on the citation networks and the newly proposed streaming algorithm (Algorithm \ref{d-3}). Consider a directed acyclic graph $G=(V,E)$ with $V=\{1,2,\ldots,n\}$, where each vertex in $V$ represents a scientific article. Let $\mathcal{R}_i$ denote the number of references contained in article $i$. The arcs between papers represent their citation relationship. For two vertices $i,j\in V$, arc $(i,j) \in E$ if and only if paper $i$ cites paper $j$. The information spreads over the reverse directions of the arcs. As an example, Fig.~\ref{fig:example} presents a citation network, which contains six vertices and seven arcs. Each of six papers cites a certain number of references.
\begin{figure}
   \centering
   \includegraphics[width=1.7in]{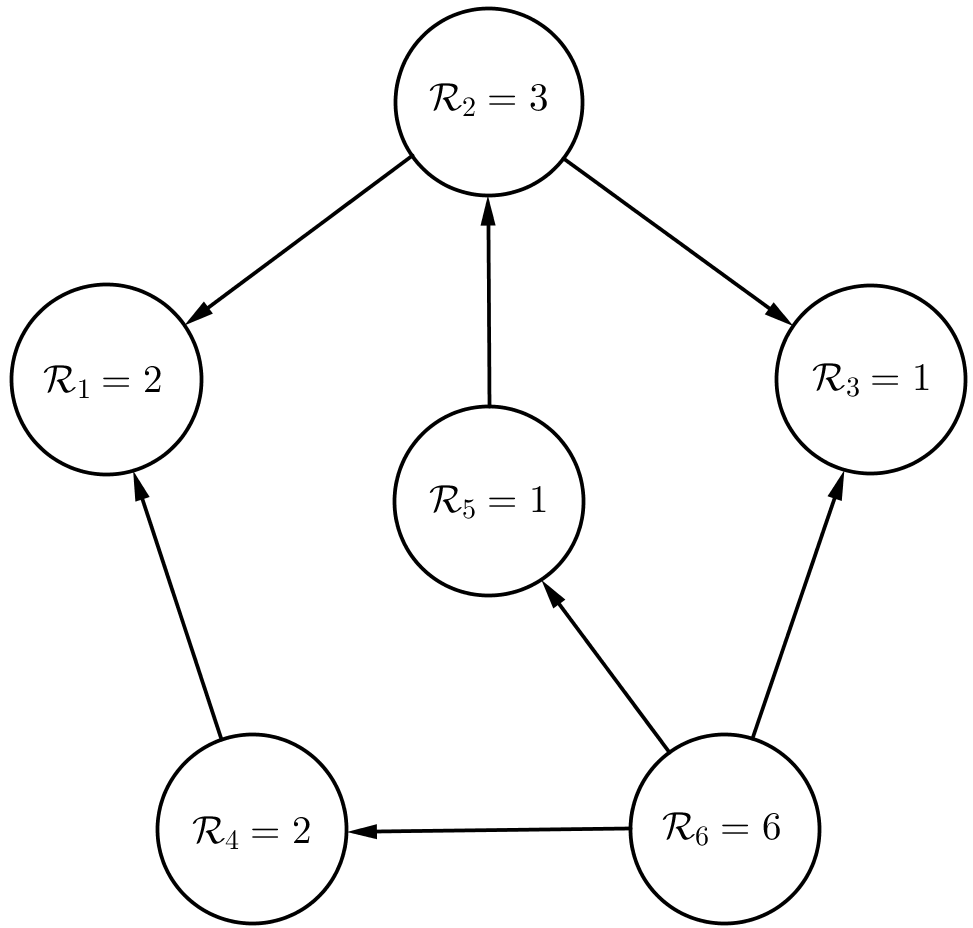}
   \caption{An Example of Citation Networks}
   \label{fig:example}
\end{figure}
The information initiates from a set of vertices (source papers), and then spreads across the network. Let $A$ be the collection of the source papers.  Our target is to select a subset $S$ out of $V$ to quickly detect the information spreading of $A$. For example, $A=\{1,3,4\}$ in Fig.~\ref{fig:example}. If we choose $S=\{6\}$, we can detect the source papers $1,3,4$ by paths $6\to 4\to 1$, $6\to 3$ and $6\to 4$, respectively. This problem can be formulated as a monotone submodular maximization under a $3$-knapsack constraint\footnote{The reason why we set $d=3$ will be explained later in this section; based on the different usages, the number of knapsack constraints and the corresponding budgets can be changed accordingly.}:
\begin{eqnarray}\label{litrec}
\begin{aligned}
   &\underset{S \subseteq V}{\textrm{maximize}} &&R(S) \\
   &\textrm{subject to} && C\bold{x}_S \leq \bold{b},
\end{aligned}
\end{eqnarray}
where $C=(c_{i,j})$ is a $3\times n$ matrix and $\bold{b}=(b_1,b_2,b_3)^T$.

Observe that the papers in $A$ transfer their influence through the citation network, but this influence becomes less as it spreads through more hops. Let $T(s,a)$ be the length of the shortest directed path from $s$ to $a$. Then the shortest path length from any vertex in $S$ to $a$ is defined as
$$T(S,a)\triangleq \min_{s\in S} T(s,a).$$
Let $W(a)$ be a pre-assigned weight to each vertex $a\in A$ such that $\sum_{a\in A}W(a)=1$. Then our goal is to minimize the expected penalty
$$\pi(S)\triangleq \sum_{a\in A} W(a) \min\{T(S,a),T_{\max}\},$$
or maximize the expected penalty reduction
$$R(S)\triangleq T_{\max}-\pi(S)=\sum_{a\in A} W(a) [T_{\max}-T(S,a)]^+,$$
where $[x]^+\triangleq \max\{x,0\}$ and $T_{\max}$ is a given maximum penalty. Note that $R$ is a monotone submodular function. To see this, for two subsets $B\subseteq C\subseteq V$, we have $T(B,a)\ge T(C,a)$ for any $a\in A$, such that $R(B)\le R(C)$; $T_{\max}-T(S,a)$ is a submodular function with respect to $S$ since
\begin{align*}
&T(B,a)-T(B\cup \{v\},a)=[T(B,a)-T(v,a)]^+ \\
&\ge [T(C,a)-T(v,a)]^+=T(C,a)-T(C\cup \{v\},a),
\end{align*}
with $v\in V\setminus C$. Then $R(S)$ is also submodular, since it is a convex combination of $T_{\max}-T(S,a)$ for $a\in A$.

We construct three constraints in (\ref{litrec}) from the aspects of recency, biased PageRank score, and reference number respectively. The first aspect is from the fact that readers prefer to read the recently published papers.  Let $c_{1,j}$ be the time difference between the publishing date of paper $j$ and the current date, and $b_1$ be the corresponding limit.

For the second aspect, the classical PageRank algorithm \cite{ilprints422} could be used to compute an important score for every vertex in the graph: a vertex will be assigned a higher score if it is connected to a more important vertex with a lower out-degree. The authors in \cite{gori2006research} introduced a so-called biased PageRank score. It is a measure of the significance of each paper, not only involving the propagation and attenuation properties of the network, but also taking the set of source vertices into account. Let $\rho(j)$ be the biased PageRank score of article $j$. We further choose a function $\xi(x)\triangleq1+\frac{1}{1+x}$ to map the PageRank score onto $(1,2]$. Then paper $j$ with the smaller value $c_{2,j}\triangleq \xi(\rho(j))$ is more valuable for the researchers. Also we set $b_2$ to be corresponding budget.

Thirdly, we assume that more references listed in the paper, more time the reader spends on picking the valuable information. Then we set $c_{3,j}$ to be the number $\mathcal{R}_j$ of references in paper $j$ and $b_3$ be the budget of the total number of references.

To evaluate the performance of Algorithm \ref{d-3},  for scientific literature recommendation, we utilize a dataset collected in \cite{joseph2007citation}. This dataset includes more than 20,000 papers in the Association of Computational Linguistics (ACL). There are two methods to evaluate the performance of an algorithm for literature recommendation: online evaluation and offline evaluation. In the online evaluation, some volunteers are invited to test the performance of the recommendation system and express their opinions. Here we use the offline evaluation to compare the function values obtained by our proposed algorithm (Algorithm \ref{d-3}) and the PageRank algorithm proposed in \cite{gori2006research}.

We perform the sensitive analysis over different knapsack constraints. With the other two constraints fixed, we change the value of the budget corresponding to the recency, biased PageRank score or reference number, respectively. Here we randomly select five nodes as the source papers. We set $T_{\max}=50$ and $W(a)=0.2$ for each source paper $a$. The results for the optimal objective values are shown in Fig. \ref{fig:b1} (with fixed $b_2=10$, $b_3=20$), Fig. \ref{fig:b2} (with fixed $b_1=20$, $b_3=20$) and Fig. \ref{fig:b3} (with fixed $b_1=20$, $b_2=10$), respectively.
It can be observed that the relative difference is around 10\% between the function values obtained by our streaming algorithm (blue lines) and the corresponding online bounds (red lines). %This result demonstrates the tightness of the online bound (\ref{outboundtheorem}) derived in Theorem~\ref{theoremforoutbound}.

\begin{figure}[H]
   \centering
   \includegraphics[width=3in]{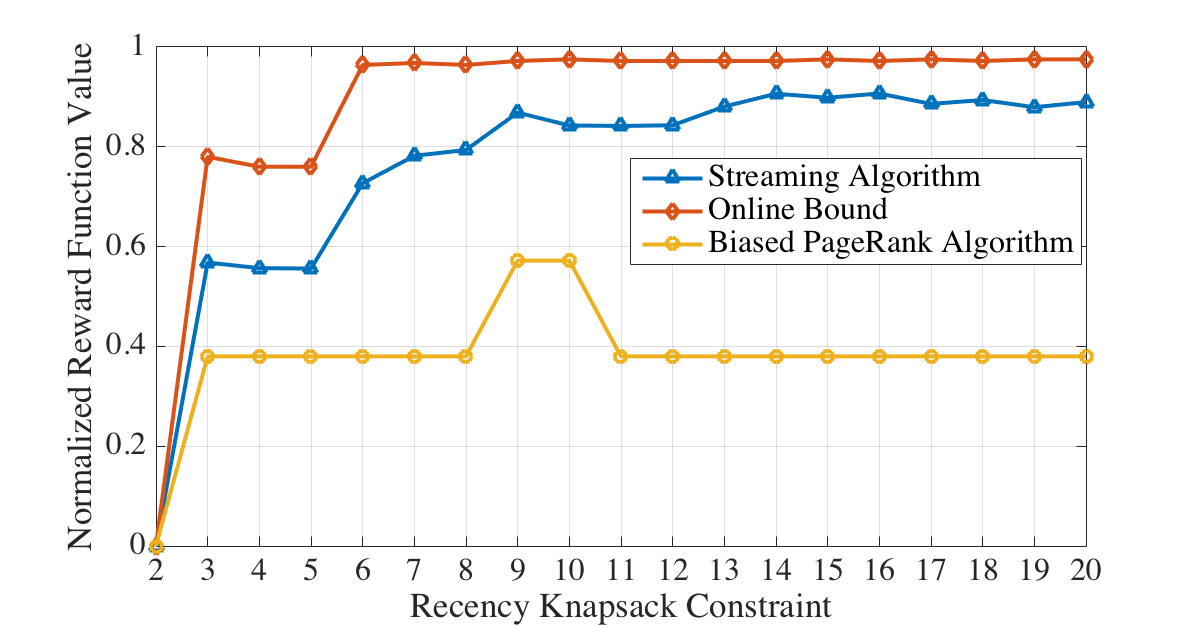}
   \caption{Optimal Function Values corresponding to Different Recency Constraints}
   \label{fig:b1}
\end{figure}
\begin{figure}[H]
  \centering
   \includegraphics[width=3in]{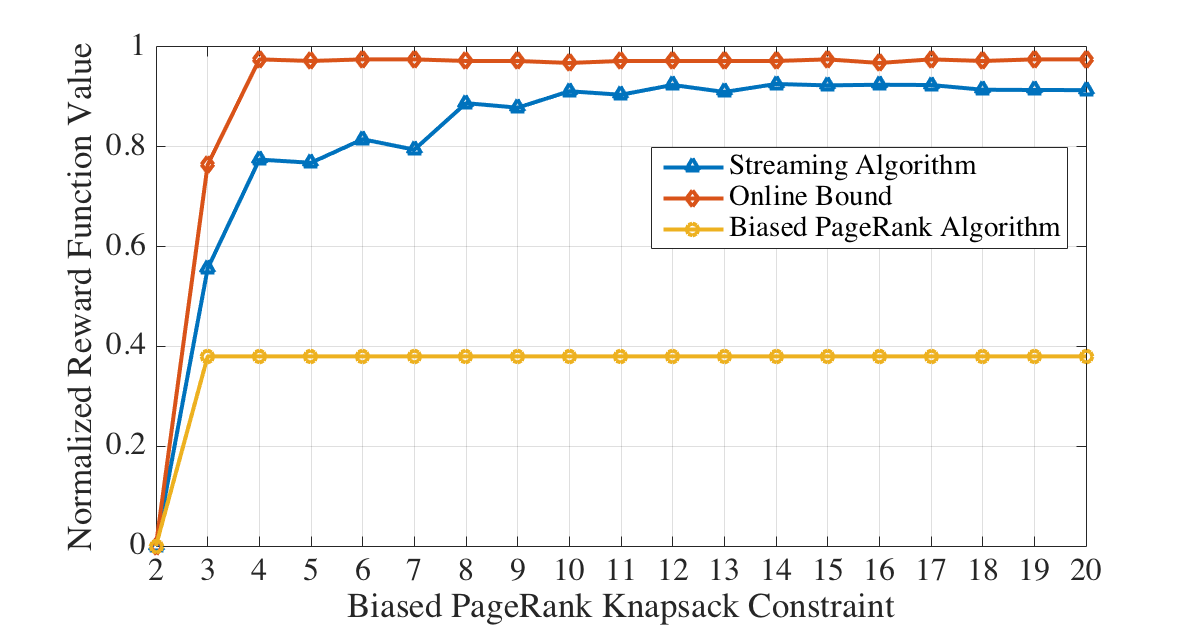}
   \caption{Optimal Function Values corresponding to Different Biased PageRank Constraints}
   \label{fig:b2}
\end{figure}
\begin{figure}[H]
   \centering
   \includegraphics[width=3in]{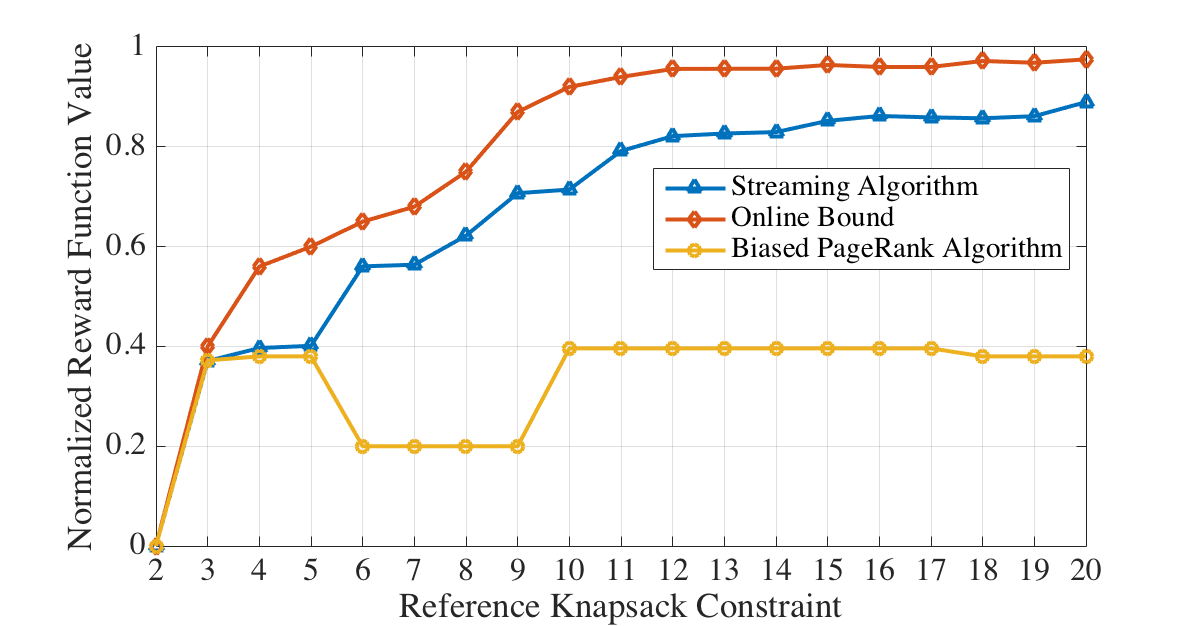}
   \caption{Optimal Function Values corresponding to Different Reference Knapsack Constraints}
   \label{fig:b3}
\end{figure}
Also, we find that our algorithm highly outperforms the biased PageRank algorithm as shown in Figs.~\ref{fig:b1}, \ref{fig:b2} and \ref{fig:b3}, respectively. Although the biased PageRank algorithm suggests the papers with high biased PageRank scores, most of the suggested papers have very long distances from the set of source articles (even disconnected from the source papers in some cases), which leads to a very low objective function value.

\section{Conclusions}
In this paper, we proposed a streaming algorithm to maximize a monotone submodular function under a $d$-knapsack constraint. It leads to a $\left(\frac{1}{1+2d} -\epsilon\right)$ approximation of the optimal value, and requires only a single pass through the dataset and a small memory size. It achieves a major fraction of the utility function value obtained by the greedy algorithm with a much lower computation cost, which makes it very practically implementable. Our algorithm provides a more efficient way to solve the related combinatorial optimization problems, which could find many good applications, such as in news and scientific literature recommendations as shown in the paper.

\section*{Acknowledgement}
We thank Dr. Nao Kakimura for comments that greatly improved the manuscript.

\end{document}